\newtheorem{theorem}{Theorem}
\newtheorem{remark}{Remark}%
\newtheorem{definition}{Definition}%
\newtheorem{assumption}{Assumption}
\begin{document}

\title[Article Title]{Towards Fair Class-wise Robustness: Class Optimal Distribution Adversarial Training}

\author[1]{\fnm{Hongxin} \sur{Zhi}}\email{newzhx@yeah.net}
\author*[1]{\fnm{Hongtao} \sur{Yu}}\email{yht202209@163.com}
\author[1]{\fnm{Shaomei} \sur{Li}}\email{lishaomei\_may@126.com}
\author[1]{\fnm{Xiuming} \sur{Zhao}}\email{zhaoxiuming@henu.edu.cn}
\author[1]{\fnm{Yiteng} \sur{Wu}}\email{wuyiteng1992@163.com}

\affil[1]{\orgdiv{Institute of Information Technology}, \orgname{Information Engineering University}, \orgaddress{\city{Zhengzhou}, \postcode{450000}, \state{Henan}, \country{China}}}


\abstract{
Adversarial training has proven to be a highly effective method for improving the robustness of deep neural networks against adversarial attacks. Nonetheless, it has been observed to exhibit a limitation in terms of robust fairness, characterized by a significant disparity in robustness across different classes.
Recent efforts to mitigate this problem have turned to class-wise reweighted methods. However, these methods suffer from a lack of rigorous theoretical analysis and are limited in their exploration of the weight space, as they mainly rely on existing heuristic algorithms or intuition to compute weights.
In addition, these methods fail to guarantee the consistency of the optimization direction due to the decoupled optimization of weights and the model parameters. They potentially lead to suboptimal weight assignments and consequently, a suboptimal model.
To address these problems, this paper proposes a novel min-max training framework, Class Optimal Distribution Adversarial Training (CODAT), which employs distributionally robust optimization to fully explore the class-wise weight space, thus enabling the identification of the optimal weight with theoretical guarantees.
Furthermore, we derive a closed-form optimal solution to the internal maximization and then get a deterministic equivalent objective function, which provides a theoretical basis for the joint optimization of weights and model parameters.
Meanwhile, we propose a fairness elasticity coefficient for the evaluation of the algorithm with regard to both robustness and robust fairness.
Experimental results on various datasets show that the proposed method can effectively improve the robust fairness of the model and outperform the state-of-the-art approaches.}

\keywords{Adversarial Training, Robust Fairness, Distributionally Robust Optimization}

\maketitle

\section{Introduction}\label{sec:1}

Deep Neural Networks (DNNs) have made remarkable breakthroughs in a variety of domains, demonstrating tremendous potential and power. However, along with the rapid advancement of this technology, a significant security challenge has emerged: the problem of adversarial vulnerability~\cite{2014arXiv1412.6572G,2013arXiv1312.6199S}. It has been shown that by adding small perturbations to natural inputs, it is easy to fool otherwise accurate models into making incorrect predictions. As DNNs become more deeply and broadly integrated into everyday life, especially in security-critical areas~\cite{Chen_Seff_Kornhauser_Xiao_2015,Ma_Niu_Gu_Wang_Zhao_Bailey_Lu_2021,2019arXiv190700374M,Sharif_Bhagavatula_Bauer_Reiter_2016}, the potential risks posed by adversarial vulnerabilities have increased dramatically. In response to these risks, a number of defense methods~\cite{2017arXiv170606083M,Papernot_McDaniel_Wu_Jha_Swami_2016,Raghunathan_Steinhardt_Liang_2018,Xie_Wu_Maaten_Yuille_He_2019,Xu_Evans_Qi_2018,zhi2024ma,Zhang_Yu_Jiao_Xing_Ghaoui_Jordan_2019} have been proposed. Among these, Adversarial Training (AT)~\cite{2017arXiv170606083M,Zhang_Yu_Jiao_Xing_Ghaoui_Jordan_2019}, is considered to be one of the most effective defense mechanisms and has received considerable attention in the robust machine learning community.
\begin{figure}
    \centering
    \subfloat[Class-wise robust accuracy disparity]{
    \includegraphics[width=0.42\linewidth]{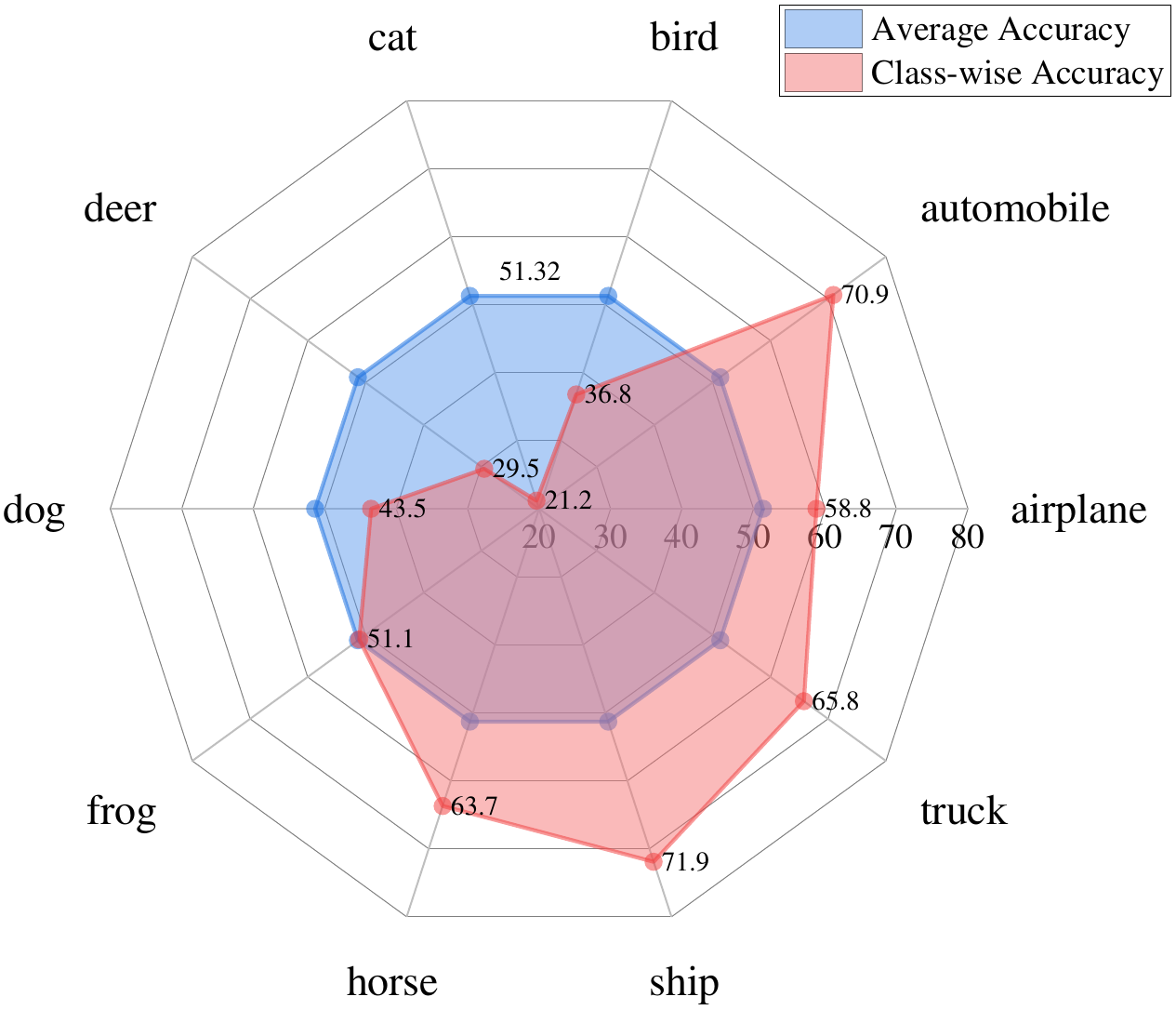}
    \label{fig:1-a}
    }
    \hfill
    \subfloat[Class-wise natural accuracy disparity]{
    \includegraphics[width=0.48\linewidth]{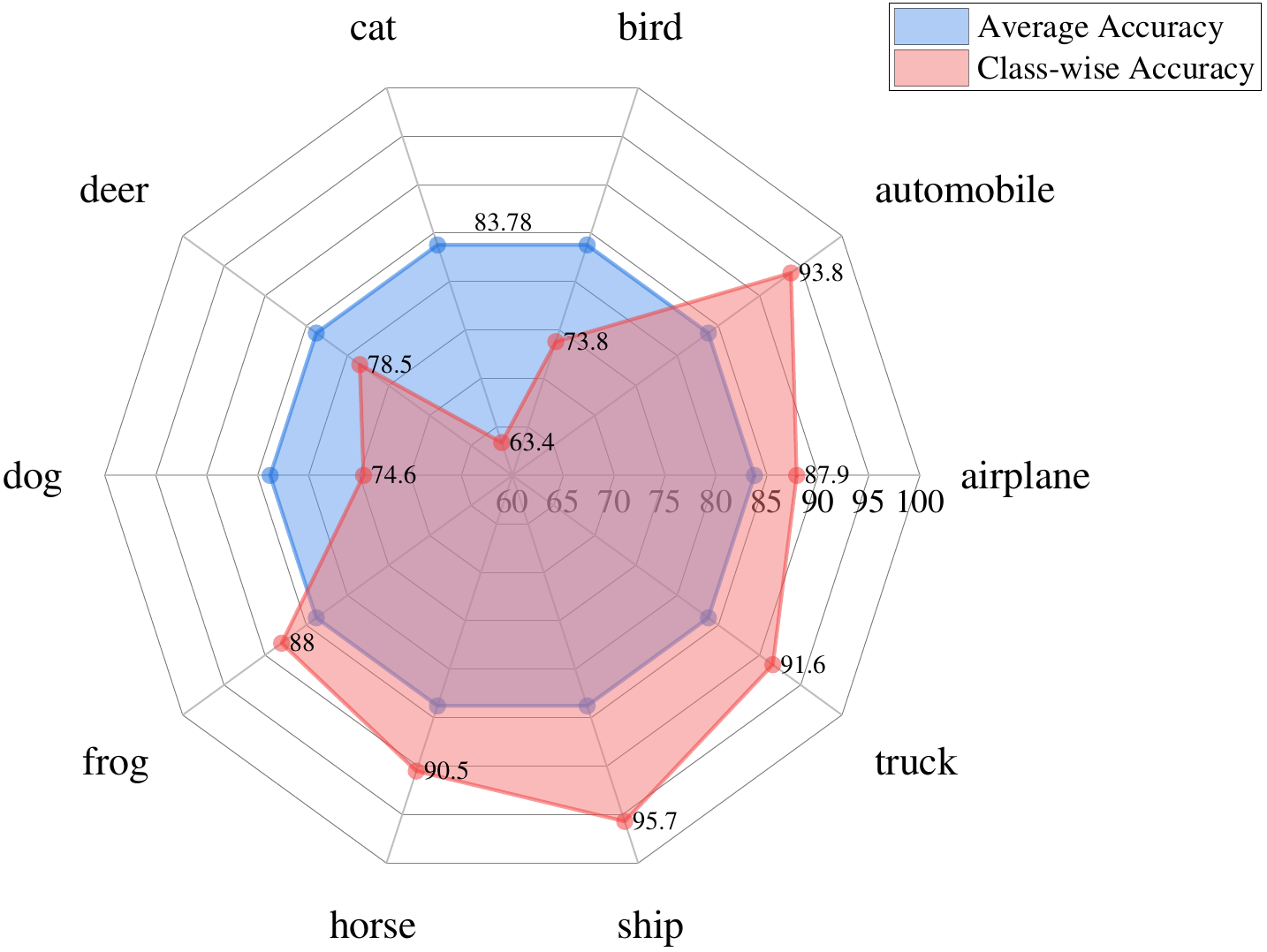}
    \label{fig:1-b}
    }
    \caption{Diagram of robust fairness problem in an adversarially trained model on CIFAR-10 using ResNet-18 under $\ell_\infty$ threat model (with the 10-step $\ell_\infty$ PGD attack). The model exhibits inconsistency in terms of accuracy on both adversarial examples (generated by the 20-step $\ell_\infty$ PGD attack) and natural input.}
    \label{fig:1}
\end{figure}

Although promising in improving the robustness of models against adversarial attacks, adversarial training still faces considerable limitations. Recent studies~\cite{benz2021robustness,2021arXiv210514240T,xu2021robust} have highlighted a critical issue: adversarially trained models can exhibit severe disparity in robust accuracy across classes, even when trained on well-balanced datasets, which has been referred to as the problem of robust fairness~\cite{xu2021robust}. For example, on the CIFAR-10 dataset, a robust ResNet-18 model has a robust accuracy of 71.9\% for class \textit{ship}, whereas it only achieves 21.2\% for class \textit{cat}, resulting in a significant gap of 50.7\%, as shown in~\cref{fig:1-a}. This class-wise disparity is also evident in the model’s accuracy on natural data (see~\cref{fig:1-b}). Skilled adversaries can identify these more vulnerable classes and launch targeted adversarial attacks, thereby increasing the likelihood of successful attacks. So, this problem greatly amplifies the potential security risks associated with adversarial vulnerability, especially in security-critical domains.

\begin{figure*}[htp]
    \centering
    \includegraphics[width=\textwidth]{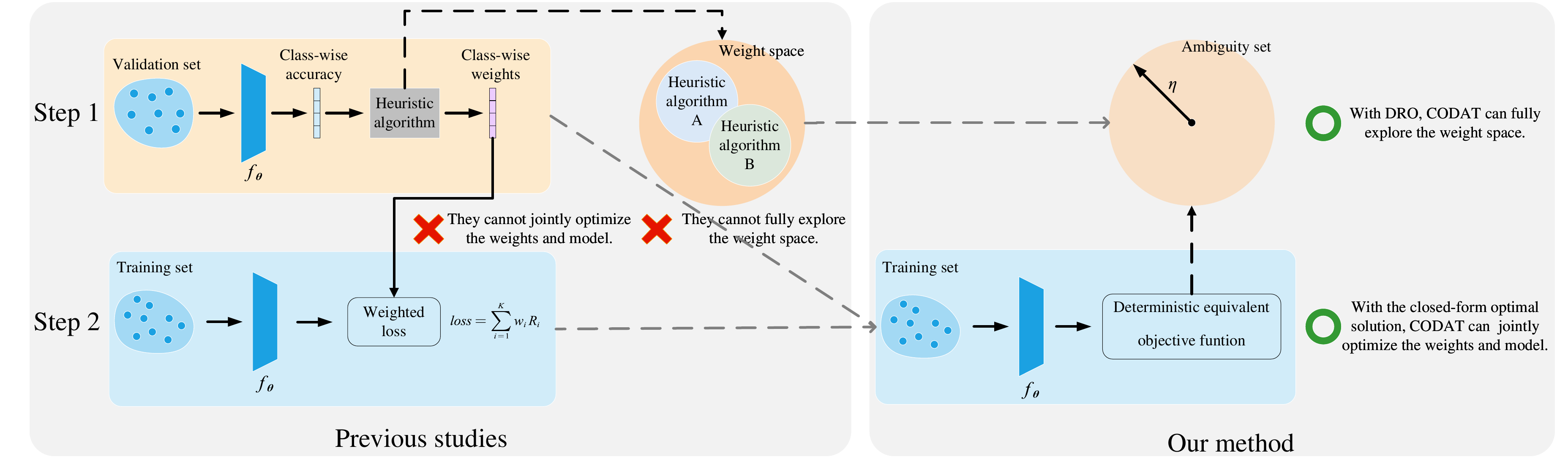}
    \caption{A simple comparison of previous studies and our method. Previous studies, which rely on heuristic algorithms for weight computation, are constrained by two limitations: an inability to fully explore the weight space and a lack of capability to jointly optimize weights and model parameters. Our method, informed by Distributionally Robust Optimization principles and equipped with a closed-form optimal solution, effectively surmounts these limitations.}
    \label{fig:2}
\end{figure*}

Some studies~\cite{benz2021robustness,xu2021robust,li2023wat,2023arXiv230208872P,sun2023improving,10205260} have been devoted to addressing this problem and have made some substantial progress. Although the approaches vary, they generally fall into the class reweighting method, which assigns an important weight to each class based on a specific algorithm to make more rational use of the limited model capacity. In particular, by increasing the weights of classes with poorer performance, these methods make the model pay more attention to the classes that play a critical role in determining the decision boundaries. However, these methods mainly rely on specific heuristic algorithms or intuition to determine the class-wise weights, which lack rigorous theoretical analysis and reliability and may be influenced by subjective and stochastic factors in the selection of algorithms, as shown in the left diagram of~\cref{fig:2}. This situation limits their capacity to explore the class-wise weight space. Furthermore, the goal of these methods is to optimize the model parameters with an existing heuristic algorithm or intuition, where the procedures for weight and model optimization are decoupled, i.e., in two separate steps. In such circumstances, it might be challenging for them to guarantee the consistency of the optimization direction with respect to weight and model parameters. Consequently, the weight assignments made during the training phase may be suboptimal, which often results in an overall suboptimal model. Therefore, the following question is naturally raised:

\begin{quote}
    \textit{How can we systematically explore the weight space under the guidance of a theoretical framework, while also guaranteeing the consistency of weight and model optimization direction to achieve overall optimality?}
\end{quote}

In this paper, we address this important yet overlooked issue in the domain of robust fairness. To fully explore the weight space, we propose a novel training framework termed Class Optimal Distribution Adversarial Training (CODAT), inspired by the principles of Distributionally Robust Optimization (DRO)~\cite{2019arXiv190805659R}. Similar to standard adversarial training, CODAT also can be formalized as a min-max optimization problem. However, its internal maximization objective is to maximize the class-wise expected risk, which is the core idea of DRO. While the external minimization task is to train the model by minimizing this maximum risk. With the guidance of DRO theory, CODAT can fully explore the class adversarial distribution space, thereby enabling the learning of the worst-case class adversarial distribution that maximizes the class-wise expected risk\footnote{In this paper, the class adversarial distribution is utilized as class weights, given their equivalence in the process of calculating the class-wise expected risk.} with theoretical guarantees, as depicted in the right diagram of~\cref{fig:2}. It should be noted that this maximum risk represents the upper bound on the risk under any possible class distribution. To reach this upper bound, CODAT needs to assign greater weights to those classes with higher risk, thus making the model more attentive to these harder-to-train classes. Consequently, the worst-case distribution represents the optimal weight assignments.

Furthermore, to ensure that the optimization directions of the weights and model are consistent, we first derive a closed-form optimal solution to the internal maximization problem and then get a deterministic equivalent objective function that combines this closed-form solution within the original objective of the model. This equivalent objective function, therefore, provides the theoretical support for the joint optimization of weights and the model. Moreover, due to their trade-off~\cite{benz2021robustness,li2023wat}, we propose the fairness elasticity coefficient as a measure to evaluate both the robustness and robust fairness of an algorithm.

The contributions of this paper can be summarized as follows:
\begin{itemize}
    \item We propose a novel min-max training framework, called Class Optimal Distribution Adversarial Training (CODAT), which fully explores the class adversarial distribution space with the guidance of distributionally robust optimization theory to find the optimal (worst-case) distribution.
    \item We derive a closed-form optimal solution to the internal maximization problem. Upon the solution, we obtain a deterministic equivalent objective function that integrates the optimal solution into the model's original objective. This integration enables the joint optimization and aligns the optimization directions of the weights and the model.
    \item We propose the fairness elasticity coefficient as a measure of evaluating an algorithm in terms of both robustness and robust fairness.
    \item Extensive experiments on various benchmark datasets demonstrate that our CODAT outperforms state-of-the-art methods.
\end{itemize}

\section{Related work}\label{sec:2}
\subsection{Adversarial training}\label{sec:2-1}
Adversarial training~\cite{2017arXiv170606083M,Zhang_Yu_Jiao_Xing_Ghaoui_Jordan_2019} is widely regarded as one of the most effective methods of defense against adversarial attacks. It can be formulated as a min-max optimization problem~\cite{2017arXiv170606083M}, where the inner maximization problem aims to find the worst-case adversarial counterpart corresponding to each natural example, and the outer minimization problem then utilizes these adversarial examples to optimize the model's parameters. Despite its success, adversarial training presents some challenges. Numerous studies~\cite{Zhang_Yu_Jiao_Xing_Ghaoui_Jordan_2019,Tsipras_Santurkar_Engstrom_Turner_Madry_2018,Yang_Rashtchian_Zhang_Salakhutdinov_Chaudhuri_2020} have confirmed the existence of a trade-off between model robustness and natural accuracy in the adversarial training paradigm. TRADES, which has become a benchmark algorithm, effectively mitigates this problem by decomposing the loss into robust loss and natural loss, and controlling the trade-off between the two through a hyperparameter. Furthermore, Rice et. al.~\cite{2020arXiv200211569R} revealed the phenomenon of robust overfitting in adversarial training, which is characterized by a continual reduction in training loss as the learning rate decays, while the test loss initially drops before escalating rapidly. In response, they proposed an "early stopping " mechanism to cope with it. In parallel with these studies, this paper focuses on the recently identified robust fairness problem within the domain of adversarial training.

\subsection{Robust fairness}\label{sec:2-2}
The class-wise disparity problem is a typical issue within the domain of long-tailed recognition. However, recent research~\cite{benz2021robustness,2021arXiv210514240T,xu2021robust} has found that a similar problem occurs with adversarial training on balanced datasets. To address this issue, Benz et al.~\cite{benz2021robustness} employ a cost-sensitive learning approach, which is widely used in long-tailed problems, to assign weights to each class based on their respective robust accuracy. FRL-RWRM~\cite{xu2021robust} fine-tunes a robust model through the re-weight and re-margin strategy to improve the robust fairness of the model. In contrast to FRL-RWRM, CFA~\cite{10205260} trains a model from scratch by dynamically adjusting the class-wise perturbation margin and regularization based on the class-wise robust accuracy on a validation set. WAT~\cite{li2023wat} adapts the framework of standard AT to focus on the worst-class during the training phase and draws on the Hedge algorithm~\cite{Freund_Schapire_1997} to compute the weights. However, the majority of these methods depend on heuristic algorithms or intuition to assign the class-wise weights, which lacks the theoretical guarantee necessary for a comprehensive exploration of the weight space. Moreover, the utilization of existing heuristic algorithms or intuition in these methods inherently separates the process of weight computation from model optimization. This separation cannot ensure the consistency of optimization objectives between weight and model parameters. In this paper, we aim to identify class-wise weights through a comprehensive exploration of the weight space with theoretical guarantees and to achieve the joint optimization of the class-wise weight and model parameters.

\subsection{Distributionally robust optimization}\label{sec:2-3}
In practical optimization scenarios, we often have access to only a limited sample of data, which allows us to estimate the empirical distribution but not the true underlying distribution. This uncertainty can lead to a degradation in model performance when dealing with unseen data. To address this challenge, DRO assumes that the true distribution lies in the vicinity of the empirical distribution and aims to identify the worst-case distribution within an ambiguous set near this empirical distribution. The model is then optimized based on this worst-case scenario to enhance its robustness against the true distribution. Selecting an appropriate ambiguity set is a crucial step in the DRO problem, which helps to accurately identify the worst-case distribution, leading to more effective model optimization. One widely used approach to constructing the ambiguity set is to specify that the distribution should reside within a certain threshold distance from the empirical distribution. In this paper, we utilize the $\chi^2$-divergence to measure the distance between two distributions.

Among contemporary studies, CFOL~\cite{2023arXiv230208872P} and FAAL~\cite{zhang2024towards} are perhaps the most analogous to our work, as they also utilize distributionally robust optimization framework to adress the challenge of robust fairness in adversarial training. However, there are a number of fundamental differences that distinguish CFOL, FAAL and CODAT. Firstly, with regard to the ambiguity set: CFOL establishes a direct link with the Conditional Value at Risk (CVaR)~\cite{Rockafellar_Uryasev_2016} ambiguity set, while FAAL relies on Kullback-Leibler (KL) divergence. In contrast, CODAT employs an ambiguity set based on the $\chi^2$-divergence. This choice of divergence metric is a crucial distinction, as it affects the robustness and applicability of the models. Secondly, the approach to the inner maximization problem diverges among the three. CFOL employs heuristic algorithms, including the Hedge algorithm~\cite{Freund_Schapire_1997} and the Exp3 algorithm~\cite{Auer_Cesa-Bianchi_Freund_Schapire_2002}, to approximate the inner maximization problem.  In contrast, FAAL employs conic convex optimization method to address the problem. In stark contrast, we derive a closed-form optimal solution, yielding a deterministic equivalent objective function. This approach ensures a more direct and theoretically grounded method for identifying the worst-case adversarial distribution. Additionally, there are differences in the learning paradigms adopted. CFOL operates as an online learning algorithm, whereas both FAAL and CODAT employ a batch learning approach. 

\section{Preliminary and Problem analysis}\label{sec:3}
In this section, we begin by introducing the notions and concepts utilized throughout this paper. We then proceed to review the principles of standard adversarial training, establishing a conceptual framework for our subsequent discussion. Our empirical analysis follows, with a focus on elucidating the underlying causes of the robust fairness challenge. Concluding this section, we present insights into current methods aimed at addressing this problem, including an analysis of their potential limitations.
\subsection{Notation}\label{sec:3-1}
In this paper, we focus on a $K$-class robust classification task over the input space $\mathcal{X}\subset \mathbb{R}^d$ and the output space $\mathcal{Y}=\{1,\cdots,K\}=:[K]$. Let $\mathbb{D}$ denote a distribution over $\mathcal{X}\times \mathcal{Y}$. $\mathcal{S}=\{(\boldsymbol{x}_i,y_i)\}_{i=1}^n$ denotes a training set sampled from $\mathcal{D}$, where $\boldsymbol{x}_i\in\mathcal{X}$, $y_i\in\mathcal{Y}$, and $n$ is the total number of examples. Assume $\mathcal{F}$ represents the hypothesis class, while $f_{\boldsymbol{\theta}}:\mathcal{X}\rightarrow\mathcal{Y}$ is a DNN classifier within $\mathcal{F}$, parameterized by $\boldsymbol{\theta}$. Let $l:\mathbb{R}^K\times\mathcal{Y}\rightarrow\mathbb{R}$ be the loss function, which is typically the cross-entropy loss. $||\cdot||_p$ denotes the $\ell_p$-norm. Let $\mathcal{B}(\boldsymbol{x},\epsilon)=\{\boldsymbol{x}':||\boldsymbol{x}'-\boldsymbol{x}||_p\leq \epsilon\}$ be the $\ell_p$-norm ball centered at $\boldsymbol{x}$ with radius $\epsilon>0$, which specifies the constraints of the perturbed examples. Within the scope of this paper, we adopt the $\ell_\infty$ threat model.

\subsection{Standard adversarial training}\label{sec:3-2}
The standard adversarial training on $\mathcal{S}=\{(\boldsymbol{x}_i,y_i)\}_{i=1}^n$ with DNN classifier $f_{\boldsymbol{\theta}}$ can be fourmulated as the following min-max optimization problem~\cite{2017arXiv170606083M}:

\begin{equation}\label{eq:1}
\min_{\boldsymbol{\theta}}\frac{1}{n}\sum\limits_{i=1}^n\max\limits_{\boldsymbol{x}'_i\in\mathcal{B}(\boldsymbol{x}_i,\epsilon)}l(f_{\boldsymbol{\theta}}(\boldsymbol{x}'_i),y_i),
\end{equation}
where $\boldsymbol{x}'_i$ is the adversarial counterpart of natural example $\boldsymbol{x}_i$, which is within the norm ball $\mathcal{B}(\boldsymbol{x}_i,\epsilon)$. By sequentially addressing the internal maximization and external minimization problems, standard AT enhances the robustness of the model.
\subsection{The potential reason for robust fairness problem in standard AT}\label{sec:3-3}
\begin{figure*}
    \centering
    \subfloat[The natural examples distribution]{
    \includegraphics[scale=.28]{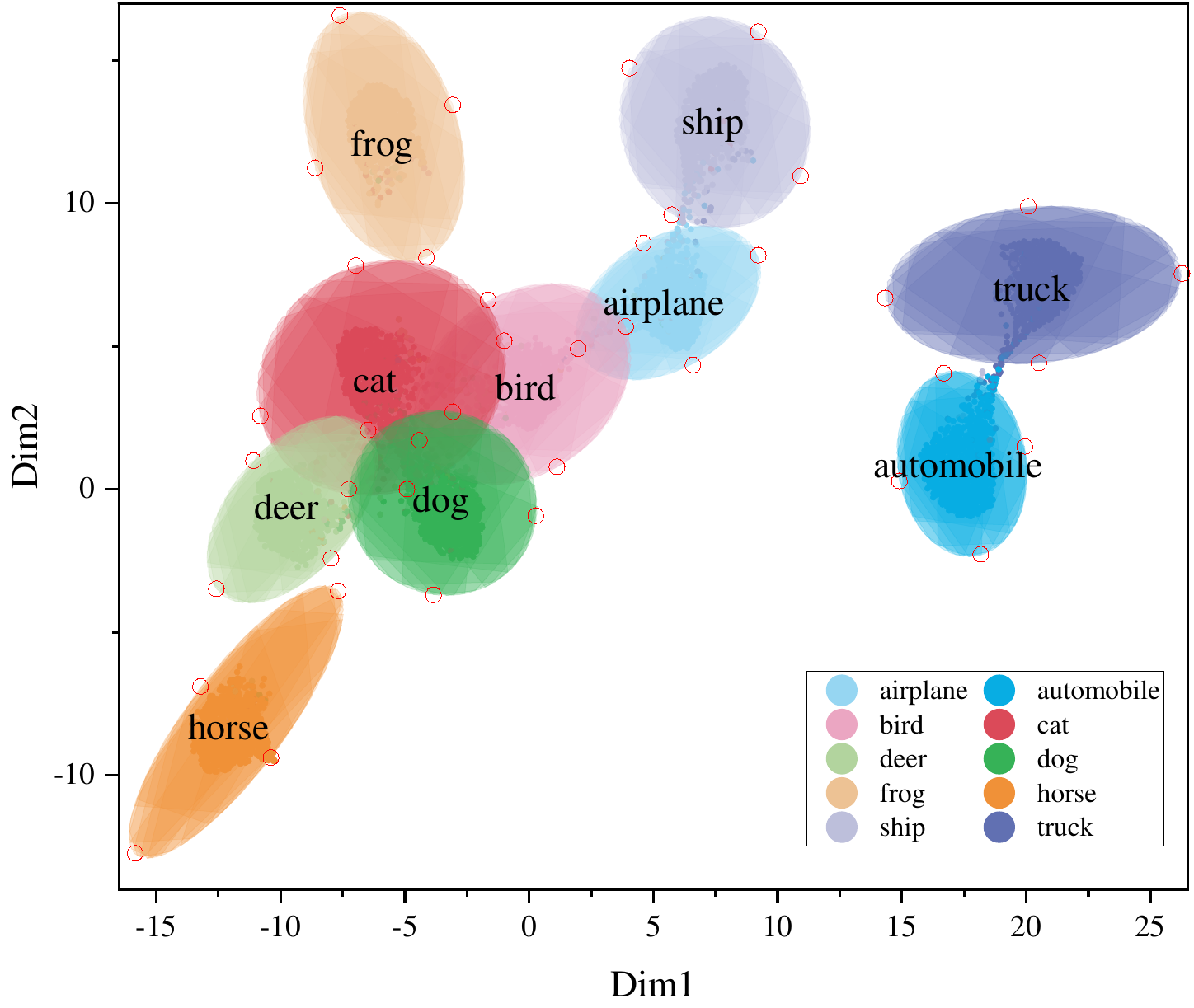}
    \label{fig:3-1}
    }
    \hfill
    \subfloat[Confusion matrix of robustness]{
    \includegraphics[scale=.28]{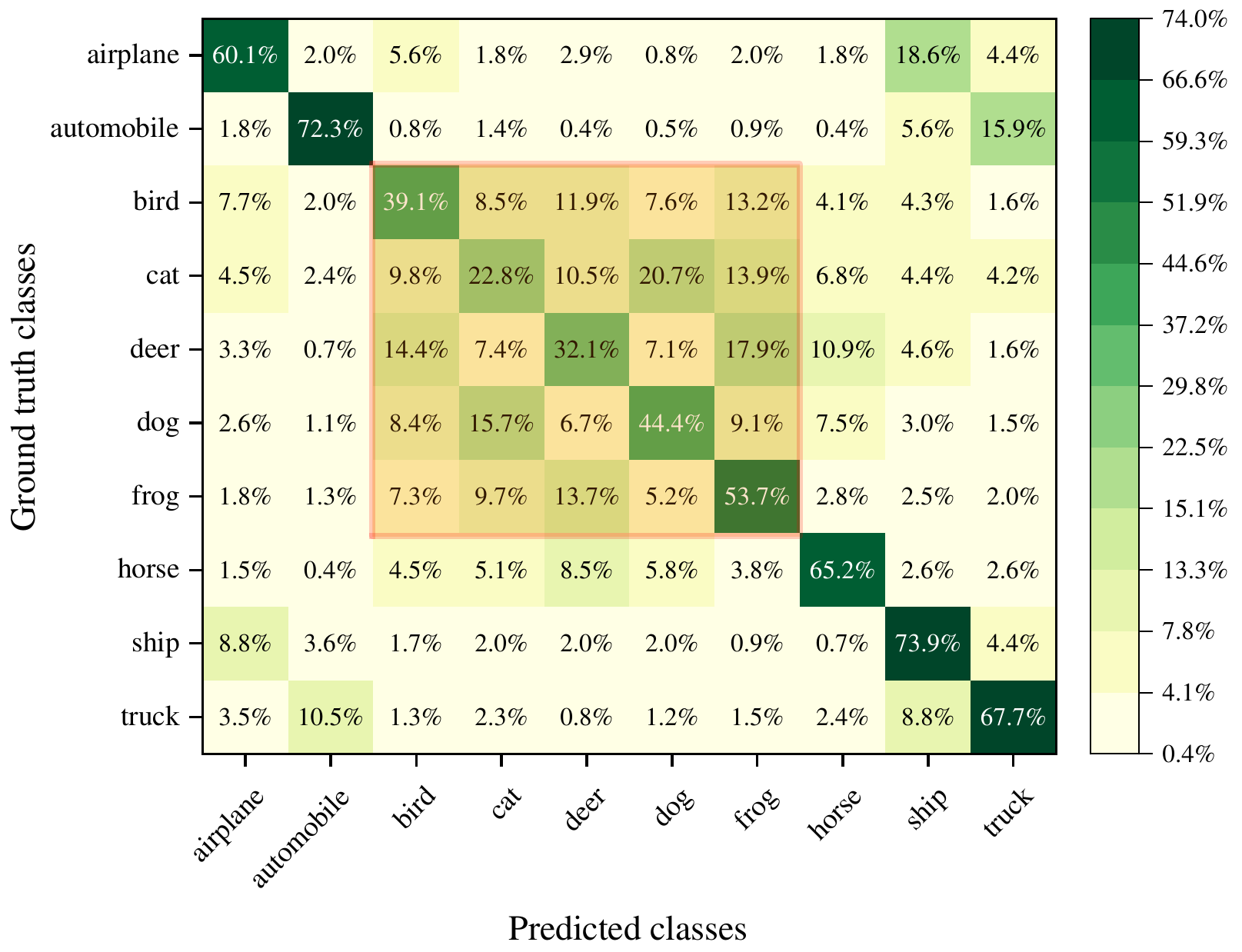}
    \label{fig:3-2}
    }
    \caption{Non-uniform semantic distances among classes in the CIFAR-10 test set. (a) UMAP visualization of the distribution of natural examples with a naturally trained ResNet-18. (b) The confusion matrix of robustness under PGD-20 attack with a robust ResNet-18. The red-highlighted areas within the matrix indicate classes that are more frequently misclassified by the model. It can be observed that classes that are more semantically closer tend to exhibit higher misclassification rates.}
    \label{fig:3}
\end{figure*}
We argue that the classes within a dataset, as delineated by human cognition, may not exhibit uniform semantic distances between every pair, potentially leading to the problem of robust fairness. \cref{fig:3-1} depicts the output distribution of the penultimate layer’s representation from a naturally trained ResNet-18 model when evaluated on the CIFAR-10 test set, utilizing the UMAP (Uniform Manifold Approximation and Projection) technique~\cite{2018arXiv180203426M} for dimensionality reduction. The visualization clearly shows that the class \textit{cat} exhibits a notably closer semantic proximity to classes such as \textit{dog}, \textit{bird}, and \textit{deer}, in contrast to its greater semantic distance from classes such as \textit{automobile} and \textit{truck}. This distinction increases the potential for misclassification among the former set relative to the latter, especially in the presence of adversarial perturbations. 

In~\cref{fig:3-2}, we present the confusion matrix that assesses the robustness of a standard adversarially trained ResNet-18 model against the PGD-20 attack on the CIFAR-10 test set. It is important to note that since the attack is untargeted, the adversary will naturally seek out the category that is most vulnerable to attack. This behavior allows the confusion matrix to effectively reveal the inherent semantic distances among the classes. As decipted in~\cref{fig:3-2}, there is a significantly higher probability of misclassification among the classes \textit{cat}, \textit{dog}, \textit{bird}, \textit{deer}, and \textit{frog}\footnote{A trade-off exists between the capture of local and global structures in UMAP visualization, which can result in distortions when representing the global inter-class distances.}. When considered alongside~\cref{fig:3-1}, it is evident that classes with greater semantic similarity are also more frequently misclassified by the model.

Consequently, examples from different classes may exert an inequitable influence on the formation of decision boundaries during the training phase. It is crucial to differentiate between examples from different classes to ensure a more balanced and robust learning process. However, standard adversarial training addresses the external minimization problem by optimizing the average risk across all adversarial examples, treating examples from all classes equally. This approach fails to account for the inherent disparities between classes. Furthermore, DNNs are prone to a phenomenon termed shortcut learning~\cite{Geirhos_Jacobsen_Michaelis_Zemel_Brendel_Bethge_Wichmann_2020}, which inclines them to concentrate on learning from classes that are more readily classified. This tendency can lead to a lack of uniformity in the model's performance across different classes, as illustrated in~\cref{fig:1}.
\subsection{Class-wise weighted adversarial training}\label{sec:3-4}
To encourage models to focus their attention on more challenging classes, numerous algorithms~\cite{benz2021robustness,sun2023improving,10205260} adopt a class-wise weighted adversarial training strategy. This method assigns weights to the robust risk of each class's examples based on a specific algorithmic criterion. Despite the diversity in the specific forms, these algorithms can be generally formulated as the following optimization problem:

\begin{equation}
    \label{eq:2}
    \begin{array}{cc}
        \min\limits_{\boldsymbol{\theta}}\sum\limits_{k=1}^Kw_kR_k^{\rm{rob}}:=\mathbb{E}_{(\boldsymbol{x},y)\in\mathcal{D}_k}[\max\limits_{\boldsymbol{x}'\in\mathcal{B}(\boldsymbol{x},\epsilon)}l(f_{\boldsymbol{\theta}}(\boldsymbol{x}'),y)],  \\
         {\rm{s.t.}}\quad \sum\limits_{k=1}^Kw_k=1, \\
         w_k\geq 0,
    \end{array}
\end{equation}
where $\mathcal{D}_k$ denotes the distribution of examples belonging to class $k$, $R_k^{\rm{rob}}$ represents the average robust risk for class $k$, and $w_k$ is the weight assigned to the class $k$.
\subsection{Worst-class adversarial training}
Unlike class-wise weighted adversarial training methods, WAT~\cite{li2023wat} and CFOL\footnote{In this paper, we consider the CFOL method to be a variant of class-weighted adversarial training approaches, given that its equivalence has been established in the original paper.}~\cite{2023arXiv230208872P}  are parallel to propose the worst-class adversarial training method that focuses on the worst class during the training phase. The objective can be expressed as follows:
\begin{equation}\label{eq:3}
	\min\limits_{\boldsymbol{\theta}}\max\limits_{k\in [K]}R_k^{\rm{rob}}.
\end{equation}

Both WAT and CFOL consider the problem posed by~\cref{eq:3} to be a zero-sum game, and they employ the Hedge algorithm for weight allocation.

These class-wise weighted and worst-class adversarial training algorithms assign greater weights to examples from classes that are more challenging to learn, with the goal of enhancing the model's robust fairness through the efficient utilization of limited model capacity. Nevertheless, these methods employ specific heuristic algorithms for weight determination, a process that is often separate from the model's optimization routine. This approach presents two primary challenges. Firstly, the heuristic-based weight computation process lacks robust theoretical underpinnings, which may limit the exploration of the weight space and result in suboptimal weight assignments. Secondly, the separation of the weight calculation from the model optimization process raises concerns about the consistency of the optimization objectives of the two problems. This inconsistency can result in a model that is not optimal.

\section{Method}\label{sec:4}
Building upon the aforementioned analysis, this section presents our proposed Class Optimal Distribution Adversarial Training (CODAT) framework. Furthermore, we present the deterministic equivalent objective function, which incorporates the derived closed-form optimal solution to the inner maximization problem.

\subsection{Class optimal distribution adversarial training}\label{sec:4-1}
To fully explore the weight space, we propose the class optimal distribution adversarial training framework, which draws inspiration from the principles of distributionally robust optimization. This framework models the class adversarial distribution and fully explores this distribution space to identify the class-wise weights in a principled manner, rather than relying on heuristic approaches. Specifically, the CODAT framework is designed to train the model to be robust against the worst-case class adversarial distribution, which corresponds to the optimal weight assignments. The framework can be formulated as the following min-max optimization problem:
\begin{equation}\label{eq:4}
    \begin{aligned}
    \min\limits_{\boldsymbol{\theta}}\max\limits_{P\in\mathbb{P}}\mathbb{E}_P[R_{\xi}^{\rm{rob}}], \\
        {\rm{s.t.}}\quad \sum\limits_{k=1}^Kp(\xi=k)=1,
    \end{aligned}
\end{equation}
where $\xi$ denotes a random variable representing the indices of the classes, taking values in the set $[K]$. $P$ represents the probatility distribution of $\xi$, specifically referring to the class adversarial distribution. The set $\mathbb{P}$ encompasses all possible distributions that $P$ can assume. It should be noted that the class-wise expected risk can be considered as the weighted sum of the risks of each class under a specified distribution $P$. Consequently, in~\cref{eq:4}, the distribution $P$ dictates the weights for each class, i.e., $p(\xi=k)$ represents the weights assigned to the samples of class $k$.

The inner maximization problem within the CODAT framework is designed to conduct a comprehensive exploration of the class adversarial distribution space, guided by theoretical guarantees, with the objective of identifying the optimal class adversarial distribution that maximizes the class-wise expected risk. Subsequently, the outer minimization problem concentrates on training the model by minimizing the expected risk under the identified worst-case distribution. It is important to note that the maximum expected risk effectively serves as an upper bound on the expected risk across all possible distributions. This characteristic ensures that, during the training process, classes which incur larger training losses are allocated higher weights. Consequently, the worst-case class adversarial distribution, as identified through this process, represents the optimal distribution for improving the model's robust fairness.

\begin{remark}\label{rmk:1}
    When $P$ is a Dirac distribution, the objective function in~\cref{eq:4} reduces to that in~\cref{eq:3}. Similarly, when $P$ is a uniform distribution, the objective function in~\cref{eq:4} reduces to the standard adversarial training objective presented in~\cref{eq:1}. Furthermore, \cref{eq:2} represents a specific instance within the scope of~\cref{eq:4}.
\end{remark}

Therefore, in comparison to existing methods, CODAT represents a more general framework that employs DRO theory to facilitate a comprehensive exploration of the class adversarial distribution space, thereby enabling the identification of optimal weight assignments. This approach overcomes the limitations of existing frameworks, which are typically based on specific heuristic algorithms for weight computation.

\subsection{A closed-form solution for the inner maximization problem}\label{sec:4-2}
To jointly optimize the class-wise weights and model parameters, we derive a closed-form solution for the inner maximization problem within the CODAT framework. This derivation yields a deterministic equivalent objective function, which in turn provides a theoretical foundation for the simultaneous optimization of both elements.

The CODAT framework, as presented in~\cref{eq:4}, can be reformulated as follows:
\begin{equation}\label{eq:5}
    \min\limits_{\boldsymbol{\theta}}\max\limits_{P\in\Delta}\mathbb{E}_P[H(\boldsymbol{\theta},\xi)]:=\mathbb{E}_P[R_{\xi}^{\rm{rob}}],
\end{equation}
where
\begin{equation}\label{eq:6}
    \Delta=\{P\in\mathbb{D}:D(P||P_0)\leq \eta\}
\end{equation}
is the ambiguity set, $\mathbb{D}$ is the set of all probability distributions, and $P_0$ represents an empirical distribution, which is typically assumed to be uniform. $D(P||P_0)$ represents the divergence between distribution $P$ and the empirical distribution $P_0$, $\eta$ is an important hyperparameter that determines the size of the set, i.e., the range of values that the distribution $P$ can assume. A larger $\eta$ often results in training failure. This is due to the fact that with a larger $\eta$, the permissible range of the distribution $P$ becomes too broad, leading the model to become excessively conservative and ultimately resulting in training failure. Conversely, a smaller $\eta$ may hinder the ability to explore the unknown distribution.

In this paper, we consider the $\chi^2$-divergence function, defined as
\begin{equation}
    D(P||P_0)=\int_{\Xi}\frac{[p(\xi)-p_0(\xi)]^2}{p_0(\xi)}d\xi. \nonumber
\end{equation}

In the context of image classification task, $P_0$ and $P$ are both discrete, so
\begin{equation}\label{eq:7}
    D(P||P_0)=\sum_{\xi\in\Xi}\frac{[p(\xi)-p_0(\xi)]^2}{p_0(\xi)}.
\end{equation}

It can be observed that~\cref{eq:5} presents a standard DRO problem, constrained by $\chi^2$-divergence. To obtain a closed-form optimal solution for the inner maximization problem, it is necessary that the objective function $H(\boldsymbol{\theta},\xi)$ satisfy certain assumptions.

\begin{assumption}\label{assumption:1}
    The variance of $H(\boldsymbol{\theta },\xi )$ under the empirical distribution $P_0$ is finite.
\end{assumption}

The variance of $H(\boldsymbol{\theta },\xi )$ is defined as
\begin{equation}
    \mathrm{Var}_{P_0}(H(\boldsymbol{\theta },\xi ))=\mathbb{E} _{P_0}[H^2(\boldsymbol{\theta },\xi )]-(\mathbb{E} _{P_0}[H(\boldsymbol{\theta },\xi )])^2. \nonumber
\end{equation}

\cref{assumption:1} stipulates that the first and second moments of $H(\boldsymbol{\theta },\xi )$ are finite under $P_0$. In light of the fact that $P_0$ is a discrete distribution and $H(\boldsymbol{\theta },\xi )=R_{\xi}^{\mathrm{rob}}$ is bounded, it can be concluded that $H(\boldsymbol{\theta },\xi )$ meets the assumption.

The following~\cref{theorem:1} presents the explicit formulation of the derived closed-form optimal solution.
\begin{theorem}\label{theorem:1}
    If~\cref{assumption:1} holds true, the closed-form optimal solution to the inner maximization problem within~\cref{eq:5} is
    \begin{equation}\label{eq:8}
        \begin{split}
            p^*(\xi )&=p_0(\xi )\\
        &+p_0(\xi )\sqrt{\frac{\eta}{\mathrm{Var}_{P_0}[H(\boldsymbol{\theta },\xi )]}}\left\{ H(\boldsymbol{\theta },\xi )-\mathbb{E} _{P_0}[H(\boldsymbol{\theta },\xi )] \right\}.
        \end{split}
    \end{equation}
\end{theorem}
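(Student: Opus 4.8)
### Proof Proposal

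\textbf{Setup and strategy.} The plan is to solve the inner maximization in~\cref{eq:5} directly as a constrained optimization problem via Lagrangian duality. Writing $h_k := H(\boldsymbol{\theta},\xi=k) = R_k^{\mathrm{rob}}$, $p_k := p(\xi=k)$, and $p_{0,k} := p_0(\xi=k)$, the problem is to maximize $\sum_k p_k h_k$ over the simplex $\{\sum_k p_k = 1,\, p_k \ge 0\}$ subject to the $\chi^2$-divergence constraint $\sum_k (p_k - p_{0,k})^2 / p_{0,k} \le \eta$. Since $\boldsymbol{\theta}$ is fixed during this step, the $h_k$ are constants and the objective is linear, while the feasible set is the intersection of the simplex with a (convex) ellipsoidal ball centered at $P_0$; hence this is a convex program with a unique-in-direction solution, and strong duality applies (Slater's condition holds because $P_0$ itself is strictly feasible when $\eta > 0$).

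\textbf{Key steps.} First I would change variables to the deviation $\delta_k := p_k - p_{0,k}$, so that the normalization becomes $\sum_k \delta_k = 0$ and the objective becomes $\sum_k h_k p_{0,k} + \sum_k h_k \delta_k = \mathbb{E}_{P_0}[H] + \sum_k h_k \delta_k$; maximizing the original objective is equivalent to maximizing $\sum_k h_k \delta_k$ over $\{\sum_k \delta_k = 0,\ \sum_k \delta_k^2/p_{0,k} \le \eta\}$ (temporarily dropping the nonnegativity constraints, to be checked at the end). Second, I would form the Lagrangian with multiplier $\lambda \ge 0$ for the divergence constraint and $\nu$ for the equality constraint, take the stationarity condition $\partial/\partial \delta_k = 0$, and solve to get $\delta_k = p_{0,k}(h_k - \nu)/(2\lambda)$ for an appropriate scalar. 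Third, the equality constraint $\sum_k \delta_k = 0$ forces $\nu = \sum_k p_{0,k} h_k = \mathbb{E}_{P_0}[H]$, so $\delta_k \propto p_{0,k}(h_k - \mathbb{E}_{P_0}[H])$. Fourth, I would determine the proportionality constant by activating the divergence constraint at equality (which holds at the optimum since the objective is non-constant in the relevant direction, assuming $\mathrm{Var}_{P_0}[H] > 0$): substituting $\delta_k = c\, p_{0,k}(h_k - \mathbb{E}_{P_0}[H])$ into $\sum_k \delta_k^2/p_{0,k} = \eta$ gives $c^2 \sum_k p_{0,k}(h_k - \mathbb{E}_{P_0}[H])^2 = c^2\, \mathrm{Var}_{P_0}[H] = \eta$, hence $c = \sqrt{\eta/\mathrm{Var}_{P_0}[H]}$ (taking the positive root since we are maximizing). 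Assembling $p_k^* = p_{0,k} + \delta_k$ yields exactly~\cref{eq:8}.

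\textbf{Main obstacle.} The routine Lagrangian calculation is straightforward; the genuinely delicate point is the nonnegativity constraint $p_k \ge 0$, which I dropped along the way. The closed form~\cref{eq:8} can violate $p_k^* \ge 0$ for classes $k$ with small risk $h_k$ relative to the mean when $\eta$ is large, in which case the true optimum lies on the boundary of the simplex and the solution has a thresholded (water-filling) form rather than the clean expression given. I expect the paper handles this by implicitly restricting to a regime of sufficiently small $\eta$ — precisely the regime the text already flags as necessary for training stability ("A larger $\eta$ often results in training failure") — so that the unconstrained-on-the-simplex solution remains feasible; I would state this as the operative hypothesis and verify that $p_k^* \ge 0$ holds whenever $\eta \le \min_k p_{0,k}\, \mathrm{Var}_{P_0}[H] / (\mathbb{E}_{P_0}[H] - h_k)^2$ over classes with $h_k < \mathbb{E}_{P_0}[H]$. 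The secondary check is that $\mathrm{Var}_{P_0}[H] \neq 0$ (guaranteed by Assumption~\ref{assumption:1} together with the non-degeneracy of class risks; if all class risks are equal the inner max is trivially attained at $P_0$), which is needed for the square root in~\cref{eq:8} to be well-defined.
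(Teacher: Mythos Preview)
Your approach is essentially the same as the paper's: both form the Lagrangian of the convex inner problem, solve the stationarity conditions to obtain $p_k - p_{0,k} \propto p_{0,k}\bigl(h_k - \mathbb{E}_{P_0}[H]\bigr)$, and fix the scale by activating the $\chi^2$-constraint at equality; the only cosmetic differences are that the paper parametrizes via the likelihood ratio $L(\xi)=p(\xi)/p_0(\xi)$ rather than your additive deviation $\delta_k$, and determines the multiplier by explicitly minimizing the dual function in $\alpha$ instead of invoking complementary slackness directly. Your treatment of the nonnegativity constraint $p_k\ge 0$ is in fact more careful than the paper's own proof, which silently drops it.
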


The proof of~\cref{theorem:1} is detailed in Appendix~\ref{append:1}. It is noteworthy that, unlike~\cite{2020arXiv200610138Q}, we have simultaneously derived the closed-form optimal solutions for both the distribution $P$ and the Lagrange multiplier $\alpha$ (which are introduced during the application of the Lagrange multiplier method), as shown in~\cref{eq:31} and~\cref{eq:33}. This improves the modeling capabilities of the DRO model.

Substituting the closed-form optimal solution $p^*(\xi )$ into~\cref{eq:5} converts the inner maximization problem into a deterministic one:
\begin{equation}\label{eq:9}
    \max_{P\in \Delta} \mathbb{E} _P[H(\boldsymbol{\theta },\xi )]=\mathbb{E} _{P_0}[H(\boldsymbol{\theta },\xi )]+\sqrt{\eta \mathrm{Var}_{P_0}(H(\boldsymbol{\theta },\xi ))}.
\end{equation}

Similarly, the inner maximization problem of CODAT can be transformed into a deterministic problem as follows, which we refer to as the deterministic equivalent objective function:
\begin{equation}\label{eq:10}
    \max_{P\in \mathbb{P}} \mathbb{E} _P[R_{\xi}^{\mathrm{rob}}]=\mathbb{E} _{P_0}[R_{\xi}^{\mathrm{rob}}]+\sqrt{\eta \mathrm{Var}_{P_0}(R_{\xi}^{\mathrm{rob}})}.
\end{equation}

In summary, the CODAT framework, as presented by~\cref{eq:4}, is equivalent to
\begin{equation}\label{eq:11}
    \min_{\theta} \mathbb{E} _{P_0}[R_{\xi}^{\mathrm{rob}}]+\sqrt{\eta \mathrm{Var}_{P_0}(R_{\xi}^{\mathrm{rob}})},
\end{equation}
and the closed-form optimal solution of the inner maximization problem is
\begin{equation}\label{eq:12}
    \begin{split}
        p^*(\xi )&=p_0(\xi )\\
        &+p_0(\xi )\sqrt{\frac{\eta}{\mathrm{Var}_{P_0}[R_{\xi}^{\mathrm{rob}}]}}[R_{\xi}^{\mathrm{rob}}-\mathbb{E} _{P_0}[R_{\xi}^{\mathrm{rob}}]].
    \end{split}
\end{equation}

From~\cref{eq:12}, it can be observed that if a class has a high risk $R_{\xi}^{\mathrm{rob}}$, CODAT assigns a large weight to it. Consequently, CODAT enables the model to focus more on the classes that are more challenging to learn, thereby enhancing the model's robust fairness. The details of CODAT are presented in~\cref{pseudocode}.

\begin{algorithm}[H]
    \caption{Class Optimal Distribution Adversarial Training}
    \label{pseudocode}
    \begin{algorithmic}[1]
    \renewcommand{\algorithmicrequire}{\textbf{Input:}}
    \renewcommand{\algorithmicensure}{\textbf{Output:}}
    \Require training set $\mathcal{S} =\{(\boldsymbol{x}_i,y_i)\}_{i=1}^{n}$, learning rate $\gamma$, DNN classifier $f_{\boldsymbol{\theta }}$ parameterized by $\boldsymbol{\theta}$, number of epochs $T$, empirical distribution $P_0$, the threshold of the ambiguity set $\eta$, number of classes $K$, batch size $m$ and loss function $l$.
    \Ensure a robust fair model $\overline{f_{\boldsymbol{\theta }}}$.
    \State Randomly initialize $\boldsymbol{\theta}$, and set $P_0$ to be a uniform distribution.
    \For{$1\leq t \leq T$}
        \State Sample a minibatch $(\mathbf{x},\mathbf{y}):=\{(\boldsymbol{x}_i,y_i)\}_{i=1}^{m}$ from $\mathcal{S}$
        \State $\mathbf{x}'\leftarrow \arg\max\limits_{\mathbf{x}'\in\mathcal{B}(\mathbf{x},\epsilon)}l(f_{\boldsymbol{\theta}}(\mathbf{x}'),\mathbf{y})$

        \State $l_{\rm{adv}}=l(f_{\boldsymbol{\theta}}(\mathbf{x}'),\mathbf{y})$
        \Statex /*\textit{The Class\_avg\_loss function calculates the average loss for each class}*/
        \For{$1\leq\xi\leq K$}
            \State $R_{\xi}^{\mathrm{rob}}=\mathrm{Class}\_\mathrm{avg}\_\mathrm{loss(}l_{\mathrm{adv}})$
        
        \EndFor
        \State $l_{\mathrm{equal}}=\mathbb{E}_{P_0}[R_{\xi}^{\mathrm{rob}}]+\sqrt{\eta \mathrm{Var}_{P_0} (R_{\xi}^{\mathrm{rob}})}$
        \State $\boldsymbol{\theta }\gets \boldsymbol{\theta }-\gamma \nabla_{\boldsymbol{\theta }}l_{\mathrm{equal}}$ 
        
    \EndFor
    \State return $\overline{f_{\boldsymbol{\theta }}}$
    \end{algorithmic}
\end{algorithm}

\subsection{The unnecessity of regularizers for degradation prevention in CODAT}\label{sec:4-3}
For the inner maximization problem within~\cref{eq:4}, it is evident that
\begin{equation}
    \max\limits_{P\in\mathbb{P}}\mathbb{E}_P[R_{\xi}^{\rm{rob}}]\leq \max\limits_{k\in[K]}R_k^{\rm{rob}}. \nonumber
\end{equation}

This indicates that the optimal solution to the maximization problem on the left-hand side may degenerate into a Dirac distribution in the absence of any precautions. As previously discussed in~\cref{rmk:1}, in such circumstances, the class adversarial distribution is unable to capture a broader range of weights, leading CODAT to revert to the formulation presented in~\cref{eq:3}. Furthermore, the Dirac distribution may result in instability of the model optimization routine and may often compromise the model's generalization capabilities~\cite{wang2021adversarial}.

To address this issue, many existing methods~\cite{2020arXiv200610138Q} incorporate a regularization term, such as the KL-divergence, to prevent the degeneration of the optimal solution into a Dirac distribution, as follows:
\begin{equation}\label{eq:13}
    \min\limits_{\boldsymbol{\theta}}\max\limits_{P\in\mathbb{P}}\mathbb{E}_P[H(\boldsymbol{\theta},\xi)-\lambda_P{\rm{KL}}(P||P_0)],
\end{equation}
where ${\rm{KL}}(\cdot)$ denotes the KL-divergence. The regularization term ${\rm{KL}}(P||P_0)$ encourages $P$ to approach $P_0$ during optimization, thereby preventing $P$ from degenerating into a Dirac distribution. However, \cref{eq:13} represents a relaxed form of the DRO problem, and this relaxation may compromise its modeling capability. Additionally, in the actual optimization process, the parameter $\lambda_P$ in~\cref{eq:13} is often treated as a constant. However, in the standard DRO problem, $\lambda_P$ itself is subject to optimization. Therefore, the solution obtained from~\cref{eq:13} may not be optimal.

As described in~\cref{sec:4-3}, we consider~\cref{eq:4} as a distributionally robust optimization problem constrained by $\chi^2$--divergence. Assuming a dataset with $K$ classes, let $P$ be a Dirac distribution, and without loss of generality, let $P=[1,0,0,\cdots,0]$. Let $P_0$ be the uniform distribution over the $K$ classes, such that each class has a probability of $\frac{1}{K}$ under $P_0$, i.e., $P_0=[\frac{1}{K},\frac{1}{K},\cdots,\frac{1}{K}]$. Then, the $\chi^2$-divergence between $P$ and $P_0$ is
\begin{equation}\label{eq:14}
    \begin{split}
        D(P||P_0) &= \sum\frac{[p(\xi)-p_0(\xi)]^2}{p_0(\xi)} \\
        &=\frac{[1-\frac{1}{K}]^2}{\frac{1}{K}}+(K-1)\cdot \frac{[0-\frac{1}{K}]^2}{\frac{1}{K}} \\
        &=K-1.
    \end{split}
\end{equation}

In accordance with~\cref{eq:14}, when $K=10$, the $\chi^2$-divergence between a Dirac distribution and a uniform distribution is 9. Similarly, with $K=100$, this divergence increases to 99. However, our experiments, as detailed in Section~\ref{sec:5-5}, indicate that the typical value selected for the constraint $\eta$ (defined in~\cref{eq:6}) on the $\chi^2$-divergence is less than 1, a value that is significantly lower than the $\chi^2$-divergence between a Dirac distribution and a uniform distribution. Consequently, it is unlikely that $P$ is a Dirac distribution. Therefore, our CODAT framework does not require the incorporation of a regularization term to prevent degeneration. 

\subsection{Fairness elasticity coefficient}\label{sec:4-4}
A number of studies~\cite{benz2021robustness,li2023wat} have demonstrated a trade-off between model robustness and fairness, indicating that improvements in robust fairness often result in a reduction in the robustness. To characterize the interrelationship between alterations in a model's fairness and the ensuing variations in its robustness, we propose the Fairness Elasticity Coefficient (FEC), a novel metric drawing inspiration from the economic concept of the elasticity coefficient, which is employed to measure the sensitivity of one variable to changes in another. Specifically, the objective of FEC is to quantify the relationship between the change in the robust accuracy of the worst class and the corresponding variation in the average robust accuracy. The definition of FEC is detailed below.
\begin{definition}
    Let $\Delta A_{\rm{wc}}=\frac{A_{\rm{wc}}-A_{\rm{wc}}^{\rm{bl}}}{A_{\rm{wc}}^{\rm{bl}}}$ denote the rate at which the worst-class accuracy improves relative to a baseline method for a given approach. Let $\Delta \overline{A}=\frac{\overline{A^{\rm{bl}}}-\overline{A}}{\overline{A^{\rm{bl}}}}$ represent the rate at which the average accuracy decreases relative to the baseline method. The Fairness Elasticity Coefficient (FEC) is then defined as
    \begin{equation}\label{eq:15}
        {\rm{FEC}}=\frac{e^{\Delta A_{\rm{wc}}}}{e^{\Delta \overline{A}}}.
    \end{equation}
\end{definition}

It can be observed that when ${\rm{FEC}}>1$, it indicates that the average robust accuracy of the model declines at a relatively slower rate compared to the improvement in the robust accuracy of the worst class. Conversely, if ${\rm{FEC}}<1$, the opposite is true. A detailed analysis of FEC is provided in Appendix~\ref{append:2}. Therefore, the FEC offers a comprehensive measure that captures the ability of the algorithm to increase robust fairness at the cost of decreasing average robustness. Obviously, a higher elasticity coefficient signifies better performance of the algorithm.

\section{Experiments}\label{sec:5}
In this section, we present the results of extensive experiments conducted across various datasets and models to evaluate the effectiveness of our proposed method in improving the robust fairness. We begin by detailing the experimental setups employed in our study. Subsequently, we present comparative analyses with state-of-the-art methods. Finally, we perform ablation studies to gain deeper insights into our framework.

\subsection{Experimental setups}\label{sec:5-1}
\begin{table*}[ht]
\caption{Comparative Performance of all methods on CIFAR-10 using ResNet-18. We evaluate the average accuracy (\%) and worst-class accuracy (\%) on natural data and under PGD-100, CW-30, and AA attacks. The best results are highlighted in \textbf{bold}.}
\label{tab:1}
\resizebox{\linewidth}{!}{
\begin{tabular}{cccccccccccccccc}
\hline
CIFAR-10 & \multicolumn{3}{c}{Natural}                     &           & \multicolumn{3}{c}{PGD-100}                     &           & \multicolumn{3}{c}{CW-30}                       &           & \multicolumn{3}{c}{AutoAttack}                  \\ \cline{2-4} \cline{6-8} \cline{10-12} \cline{14-16} 
Method   & Avg.           & Wst.           & FEC           &           & Avg.           & Wst.           & FEC           &           & Avg.           & Wst.           & FEC           &           & Avg.           & Wst.           & FEC           \\ \hline
AT       & 83.81 & 65.40 & 1.00 &           & 49.57          & 22.00          & 1.00          &           & 49.87 & 22.40          & 1.00          &           & 47.01          & 18.80          & 1.00          \\
TRADES   & 82.38          & 68.30          & 1.03          &           & \textbf{52.55} & 28.50          & 1.43          &           & \textbf{50.93} & 25.90          & 1.19          &           & \textbf{49.52} & 24.50          & 1.43          \\
FRL-RWRM & 83.47 & 71.10 & 1.09 &           & 48.70          & 30.60          & 1.45          &           & 48.28          & 30.90          & 1.42          &           & 46.09          & 26.40          & 1.47          \\
BAT      & \textbf{86.46} & \textbf{75.20} & \textbf{1.20} &  & 48.30          & 26.10 & 1.17 &           & 47.40          & 24.00          & 1.02          &           & 44.90          & 21.90          & 1.13          \\
CFOL     & 81.37          & 64.70          & 0.96          &           & 47.46          &32.30 & 1.53          &           & 46.03          & 25.60 & 1.07          &           & 43.01          & 22.10 & 1.09          \\
WAT      & 80.05 & 65.20          & 0.95          &           & 49.69 & 36.30          & 1.92          &           & 48.25 & 33.10          & 1.56          &           & 46.80 & 30.90          & 1.89          \\
CODAT    & 80.62          & 66.40          & 0.98          &           & 50.56          & \textbf{37.30} & \textbf{2.05} &  & 47.87          & \textbf{34.10} & \textbf{1.62} &  & 46.57          & \textbf{32.30} & \textbf{2.03} \\ \hline
\end{tabular}}
\end{table*}

\begin{table*}[ht]
\caption{Comparative Performance of all methods on CIFAR-100 using ResNet-18. We evaluate the average accuracy (\%) and worst-class accuracy (\%) on natural data and under PGD-100, CW-30, and AA attacks. The best results are highlighted in \textbf{bold}.}
\label{tab:2}
\resizebox{\linewidth}{!}{
\begin{tabular}{cccccccccccccccc}
\hline
CIFAR-100 & \multicolumn{3}{c}{Natural}                     &           & \multicolumn{3}{c}{PGD-100}                    &           & \multicolumn{3}{c}{CW-30}                      &           & \multicolumn{3}{c}{AutoAttack}                 \\ \cline{2-4} \cline{6-8} \cline{10-12} \cline{14-16} 
Method    & Avg.           & Wst.           & FEC           &           & Avg.           & Wst.          & FEC           &           & Avg.           & Wst.          & FEC           &           & Avg.           & Wst.          & FEC           \\ \hline
AT        &57.52 & 20.00 & 1.00 &           & 24.36          & 2.00          & 1.00          &           & 24.04 & 2.00          & 1.00          &           & 22.19          & 1.00          & 1.00          \\
TRADES    & 55.24          & 17.00          & 0.83          &           & 27.89 & 2.00          & 1.16          &           & 25.05 & 1.00          & 0.63          &           & 24.07 & 1.00          & 1.09          \\
FRL-RWRM  & 53.40 & \textbf{24.00} & 1.14 &           & 22.68          & 2.00          & 0.93          &           & 21.46          & 2.00          & 0.90          &           & 19.99          & 1.00          & 0.91          \\
BAT       & \textbf{61.74} & 22.00 & \textbf{1.19} &  & 28.71          & \textbf{4.00} & \textbf{3.25} &           & 24.32          & 1.00          & 0.61          &           & 22.88          & 1.00          & 1.03          \\
CFOL      & 53.51          & 20.00          & 0.93          &           & 24.21          & \textbf{4.00} & 2.70          &           & 22.92          & \textbf{3.00} & 1.57          &           & 20.74          & \textbf{2.00} & 2.55          \\
WAT       & 53.00 & 22.00          & 1.02          &           & \textbf{29.00} & 3.00          & 1.99          &           & \textbf{27.00} & 2.00          & 1.13          &           & \textbf{26.00} & 1.00          & 1.19          \\
CODAT     & 55.90          & 20.00          & 0.97          &           & 27.60          & \textbf{4.00} & 3.10 &  & 23.83          & \textbf{3.00} & \textbf{1.63} &  & 22.92          & \textbf{2.00} & \textbf{2.81} \\ \hline
\end{tabular}}
\end{table*}

\begin{table*}[ht]
\caption{Comparative Performance of all methods on SVHN using ResNet-18. We evaluate the average accuracy (\%) and worst-class accuracy (\%) on natural data and under PGD-100, CW-30, and AA attacks. The best results are highlighted in \textbf{bold}.}
\label{tab:3}
\resizebox{\linewidth}{!}{
\begin{tabular}{cccccccccccccccc}
\hline
SVHN     & \multicolumn{3}{c}{Natural}                     &           & \multicolumn{3}{c}{PGD-100}                     &           & \multicolumn{3}{c}{CW-30}                       &           & \multicolumn{3}{c}{AutoAttack}                  \\ \cline{2-4} \cline{6-8} \cline{10-12} \cline{14-16} 
Method   & Avg.           & Wst.           & FEC           &           & Avg.           & Wst.           & FEC           &           & Avg.           & Wst.           & FEC           &           & Avg.           & Wst.           & FEC           \\ \hline
AT       & 92.74 & 87.20 & 1.00 &           & 53.18          & 35.78          & 1.00          &           & 51.61 & 36.81          & 1.00          &           & 47.27          & 31.93          & 1.00          \\
TRADES   & 88.30          & 76.13          & 0.84          &           & \textbf{56.43} & 35.54          & 1.06          &           & \textbf{53.15} & 32.17          & 0.91          &           & \textbf{50.52} & 29.46          & 0.99          \\
FRL-RWRM & 93.65 & \textbf{89.10} & \textbf{1.03} &           & 54.48          & 39.16          & 1.13          &           & 51.86          & 36.51          & 1.00          &           & 47.02          & 33.19          & 1.03          \\
BAT      & 90.19 & 79.77 & 0.89 &  & 41.55          & 29.25 & 0.67 &           & 31.78          & 19.70          & 0.43          &           & 24.59          & 15.78          & 0.37          \\
CFOL     & 89.85          & 82.27          & 0.92          &           & 44.10          & 32.76 & 0.77          &           & 40.63          & 31.05 & 0.69          &           & 37.04          & 27.55 & 0.70          \\
WAT      & \textbf{93.73} & 88.17          & 1.02          &           & 54.73 & 41.20          & 1.20          &           & 50.53 & 37.17          & 0.99          &           & 47.02 & 33.25          & 1.04          \\
CODAT    & 91.69          & 87.86          & 1.00          &           & 54.73          & \textbf{46.91} & \textbf{1.41} &  & 48.98          & \textbf{40.53} & \textbf{1.05} &  & 46.27          & \textbf{38.55} & \textbf{1.20} \\ \hline
\end{tabular}}
\end{table*}

\begin{table*}[ht]
\caption{Comparative Performance of all methods on STL-10 using ResNet-18. We evaluate the average accuracy (\%) and worst-class accuracy (\%) on natural data and under PGD-100, CW-30, and AA attacks. The best results are highlighted in \textbf{bold}.}
\label{tab:4}
\resizebox{\linewidth}{!}{
\begin{tabular}{cccccccccccccccc}
\hline
STL-10   & \multicolumn{3}{c}{Natural}                     &           & \multicolumn{3}{c}{PGD-100}                     &           & \multicolumn{3}{c}{CW-30}                       &           & \multicolumn{3}{c}{AutoAttack}                  \\ \cline{2-4} \cline{6-8} \cline{10-12} \cline{14-16} 
Method   & Avg.           & Wst.           & FEC           &           & Avg.           & Wst.           & FEC           &           & Avg.           & Wst.           & FEC           &           & Avg.           & Wst.           & FEC           \\ \hline
AT       & 64.31          & 38.00          & 1.00          &           & 36.88          & 12.13          & 1.00          &           & \textbf{35.45} & 7.63           & 1.00          &           & 33.88          & 5.75           & 1.00          \\
TRADES   & 63.95          & 40.00          & 1.05          &           & \textbf{38.06} & 12.25          & 1.04          &           & 35.15 & 9.63           & 1.29          &           & \textbf{34.68} & 9.00           & 1.80          \\
FRL-RWRM & \textbf{65.58} & \textbf{46.25} & \textbf{1.27} &           & 33.80          & 15.63          & 1.23          &           & 31.94          & 12.50          & 1.71          &           & 31.64          & 10.50          & 2.14          \\
BAT      & 60.40 & 40.13 & 1.00 &  & 27.43          & 7.50  & 0.53 &           & 22.66          & 4.25           & 0.45          &           & 21.98          & 3.88           & 0.51          \\
CFOL     & 51.58          & 34.50          & 0.75          &           & 26.45          & 14.75 & 0.94          &           & 24.19          & 8.38  & 0.80          &           & 22.99          & 6.13  & 0.77          \\
WAT      & 59.31 & 43.63          & 1.07          &           & 32.91 & 14.63          & 1.10          &           & 29.93 & 12.00          & 1.52          &           & 29.66 & 11.63          & 2.45          \\
CODAT    & 60.06          & 46.00          & 1.16          &           & 33.31          & \textbf{17.50} & \textbf{1.41} &  & 30.01          & \textbf{13.75} & \textbf{1.91} &  & 29.54          & \textbf{13.25} & \textbf{3.24} \\ \hline
\end{tabular}}
\end{table*}

\begin{table*}[ht]
\caption{Comparative Performance of all methods on CIFAR-10 using WideResNet-34-10. We evaluate the average accuracy (\%) and worst-class accuracy (\%) on natural data and under PGD-100, CW-30, and AA attacks. The best results are highlighted in \textbf{bold}.}
\label{tab:5}
\resizebox{\linewidth}{!}{
\begin{tabular}{cccccccccccccccc}
\hline
CIFAR-10 & \multicolumn{3}{c}{Natural}                     &           & \multicolumn{3}{c}{PGD-100}                     &           & \multicolumn{3}{c}{CW-30}                       &           & \multicolumn{3}{c}{AutoAttack}                  \\ \cline{2-4} \cline{6-8} \cline{10-12} \cline{14-16} 
Method   & Avg.           & Wst.           & FEC           &           & Avg.           & Wst.           & FEC           &           & Avg.           & Wst.           & FEC           &           & Avg.           & Wst.           & FEC           \\ \hline
AT       & \textbf{86.62} & \textbf{75.20} & \textbf{1.00} &           & 48.84          & 23.30          & 1.00          &           & 50.19 & 25.40          & 1.00          &           & 47.51          & 22.30          & 1.00          \\
TRADES   & 84.52          & 71.50          & 0.93          &           & \textbf{54.61} & 31.50          & 1.60          &           & \textbf{53.97} & 30.70          & 1.33          &           & \textbf{52.19} & 27.70          & 1.41          \\
FRL-RWRM & 85.04 & 70.70 & 0.92 &           & 50.68          & 28.90          & 1.32          &           & 51.60          & 29.60          & 1.21          &           & 49.40          & 27.40          & 1.31          \\
BAT      & 86.08 & 74.20 & 0.98 &  & 47.44          & 27.70 & 1.17 &           & 49.20          & 30.10          & 1.18          &           & 45.24          & 25.00          & 1.08          \\
CFOL     & 86.14          & 71.40          & 0.95          &           & 50.60          & 31.10 & 1.45          &           & 50.64          & 31.80 & 1.30          &           & 48.16          & 27.80 & 1.30          \\
WAT      & 83.66 & 71.90          & 0.92          &           & 52.42 & 33.60          & 1.67          &           & 51.67 & 32.10          & 1.34          &           & 50.10 & 30.30          & 1.51          \\
CODAT    & 83.89          & 73.10          & 0.94          &           & 53.97          & \textbf{39.00} & \textbf{2.18} &  & 51.55          & \textbf{34.90} & \textbf{1.49} &  & 50.16          & \textbf{33.40} & \textbf{1.74} \\ \hline
\end{tabular}}
\end{table*}

\begin{figure*}
    \centering
    \subfloat[AT and CODAT]{
        \includegraphics[scale=0.22]{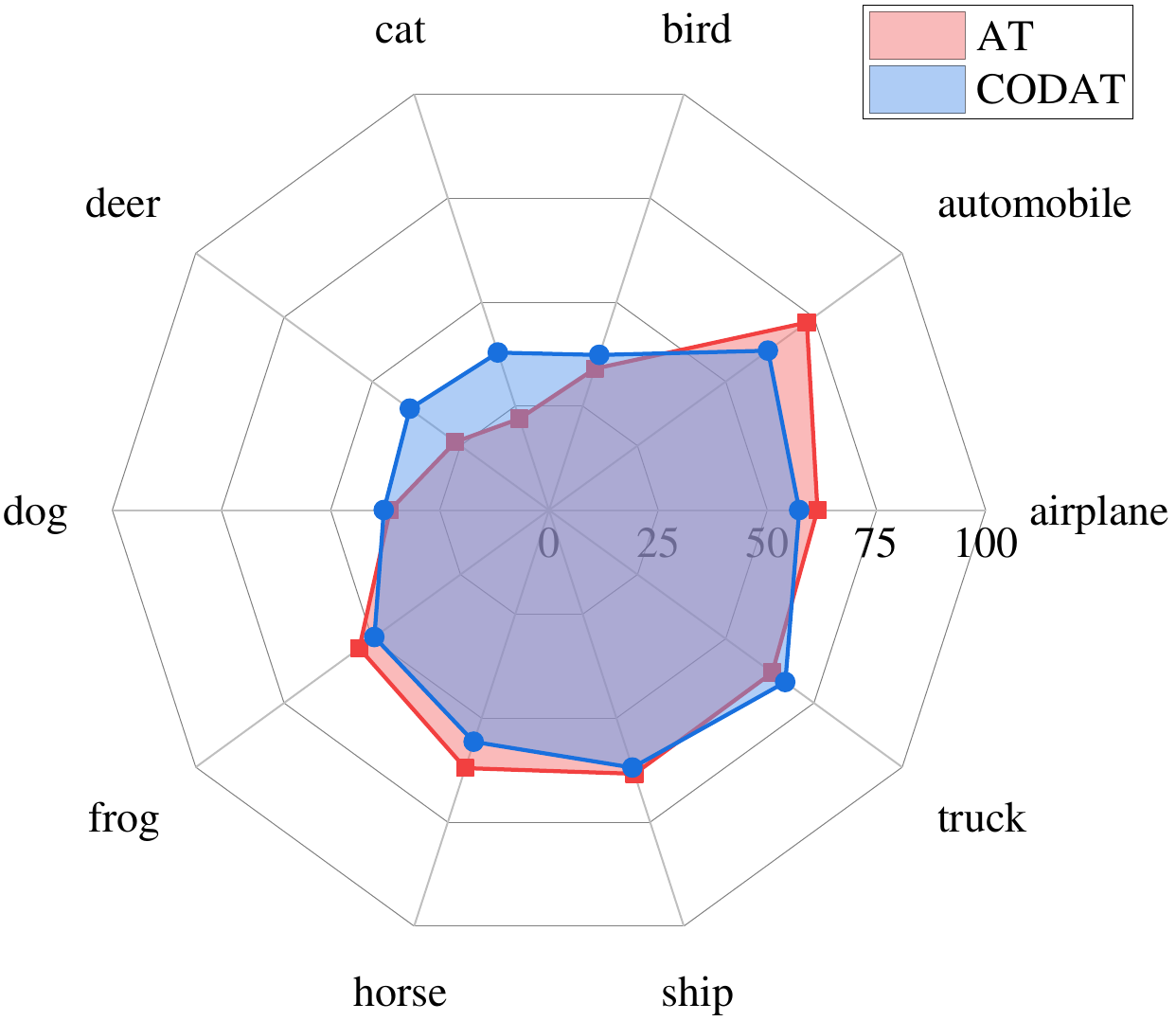}
        \label{fig:4-1}
    }
    \hfil
    \subfloat[TRADES and CODAT]{
        \includegraphics[scale=0.22]{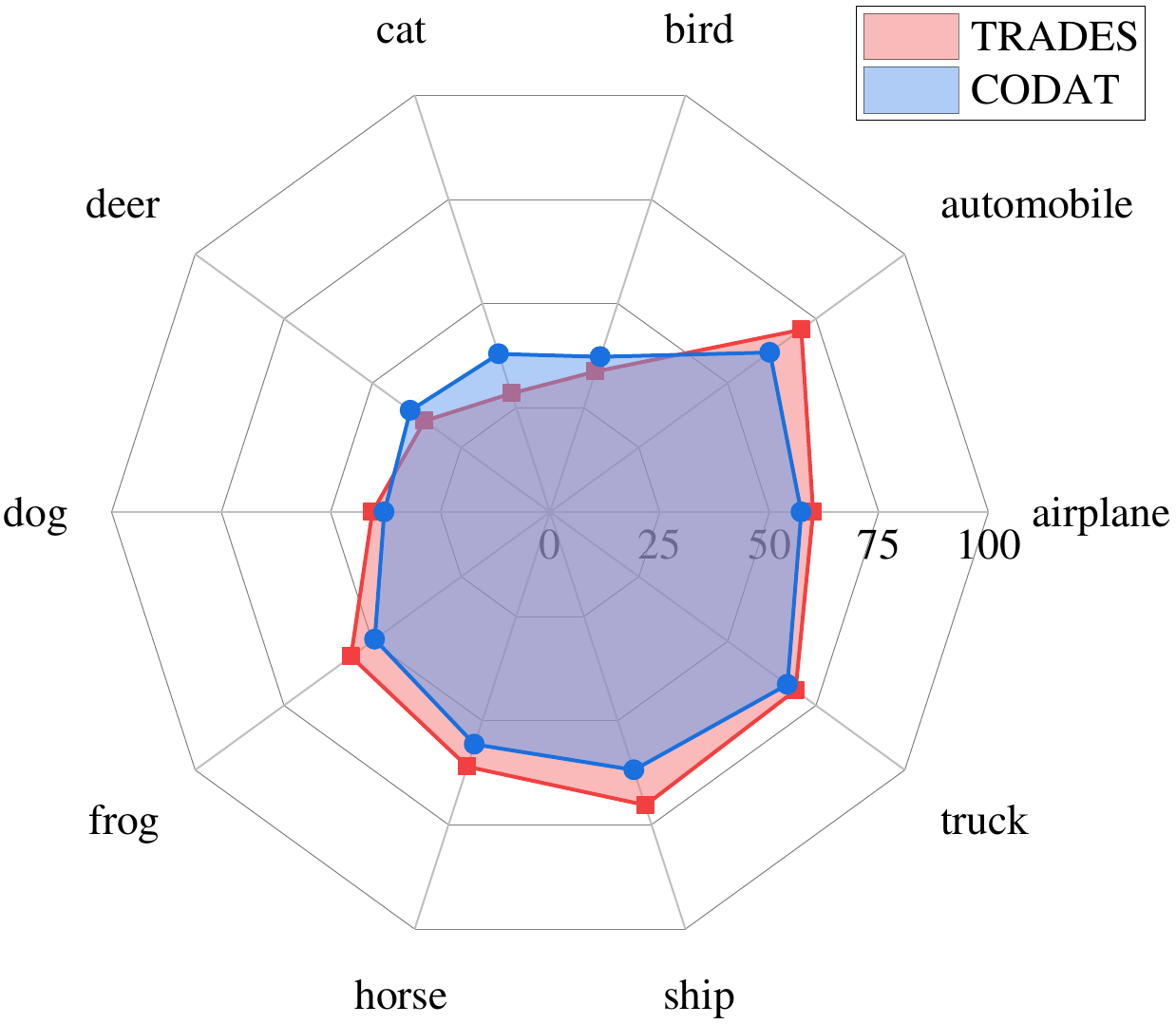}
        \label{fig:4-2}
    }
    \hfil
    \subfloat[FRL-RWRM and CODAT]{
        \includegraphics[scale=0.22]{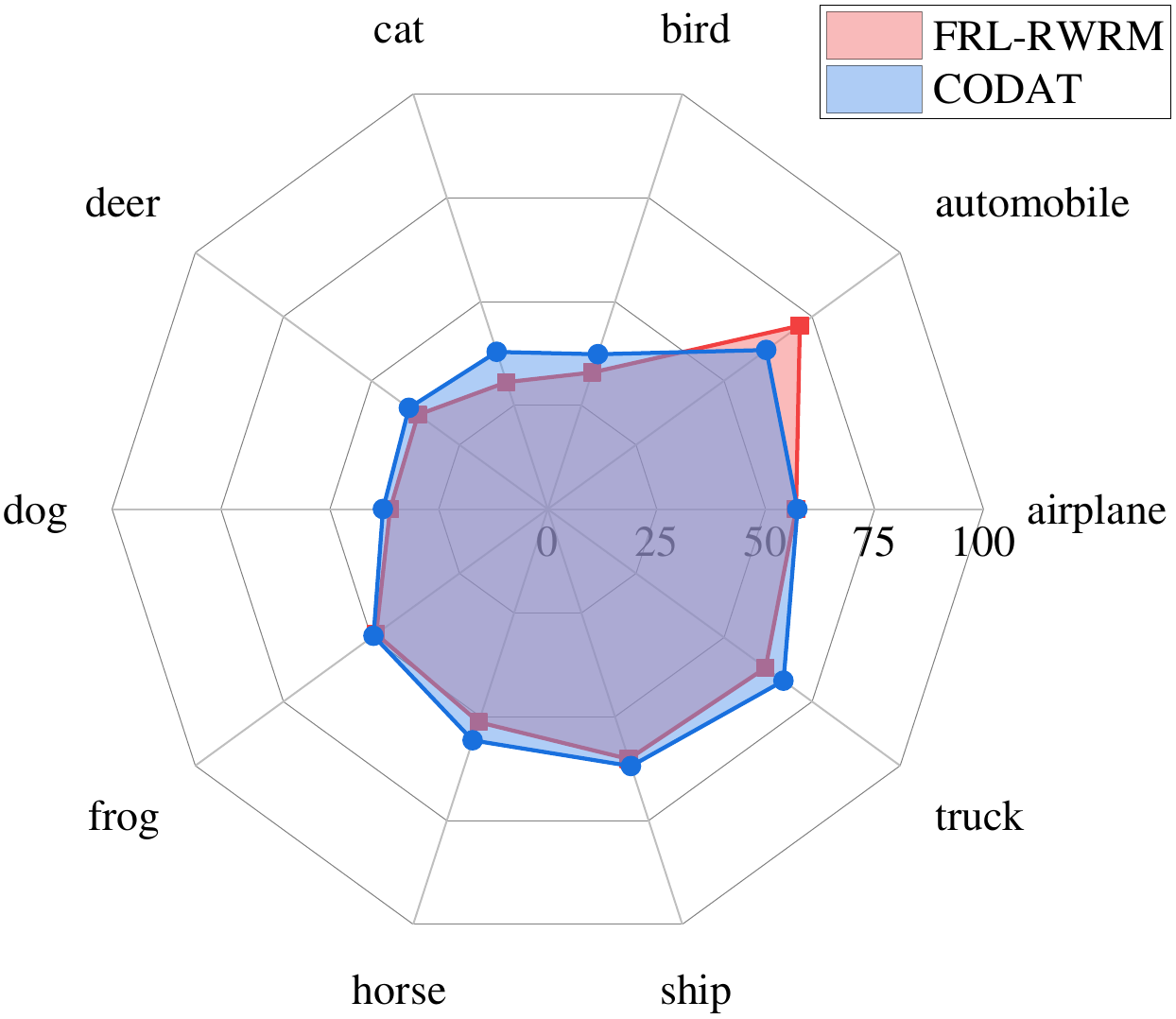}
        \label{fig:4-3}
    }

    \subfloat[BAT and CODAT]{
        \includegraphics[scale=0.22]{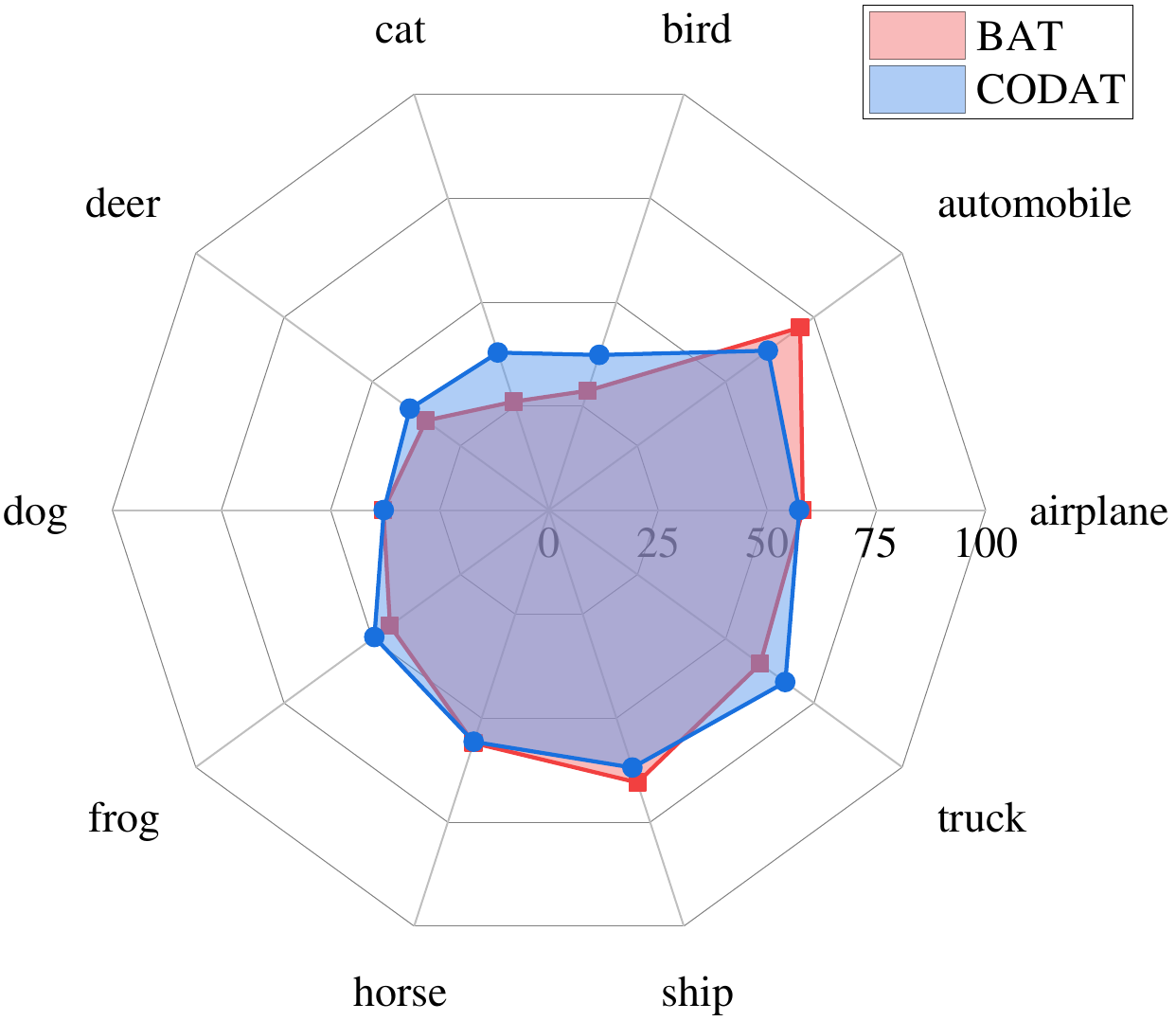}
        \label{fig:4-4}
    }
    \hfil
    \subfloat[CFOL and CODAT]{
        \includegraphics[scale=0.22]{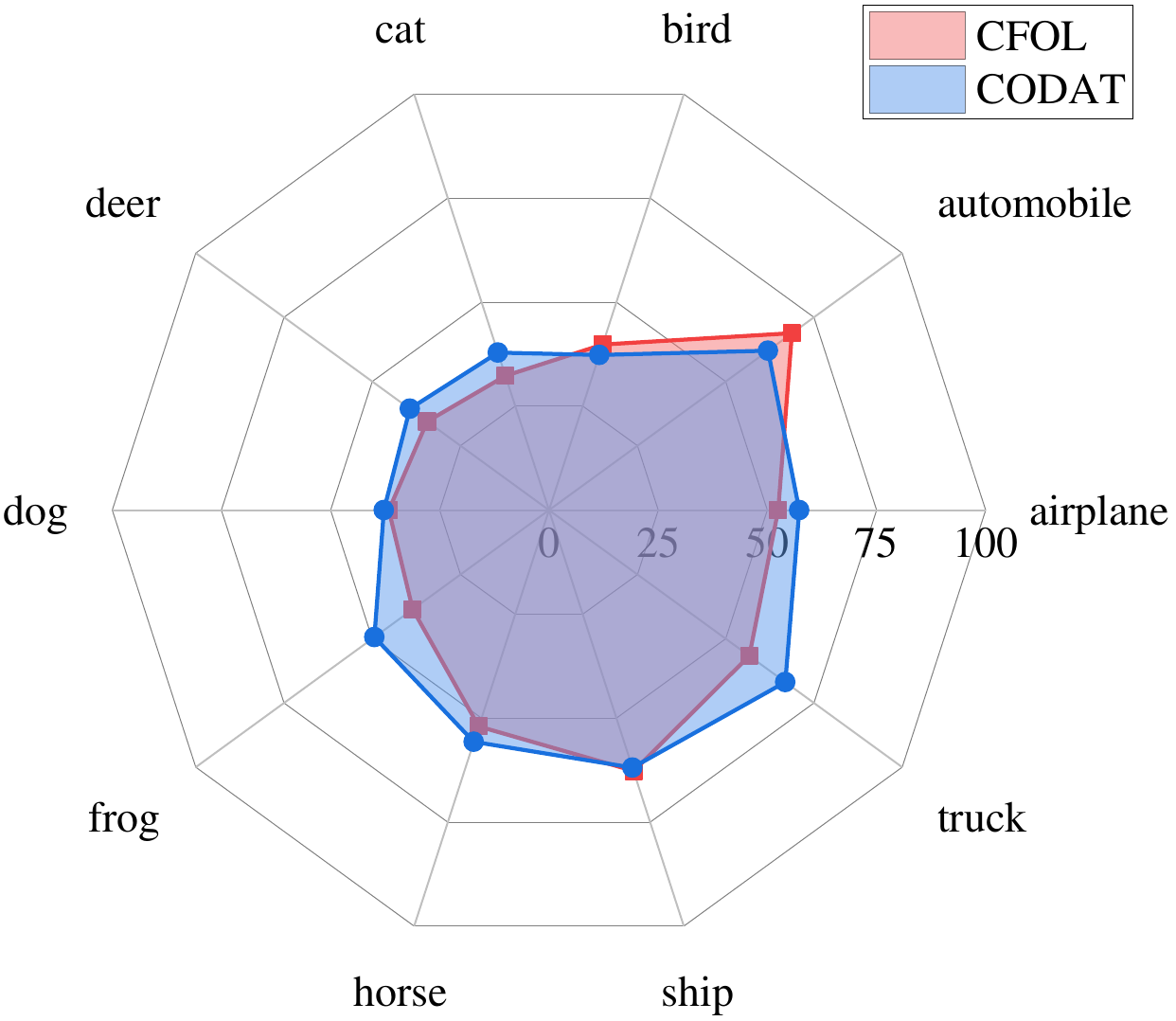}
        \label{5}
    }
    \hfil
    \subfloat[WAT and CODAT]{
        \includegraphics[scale=0.22]{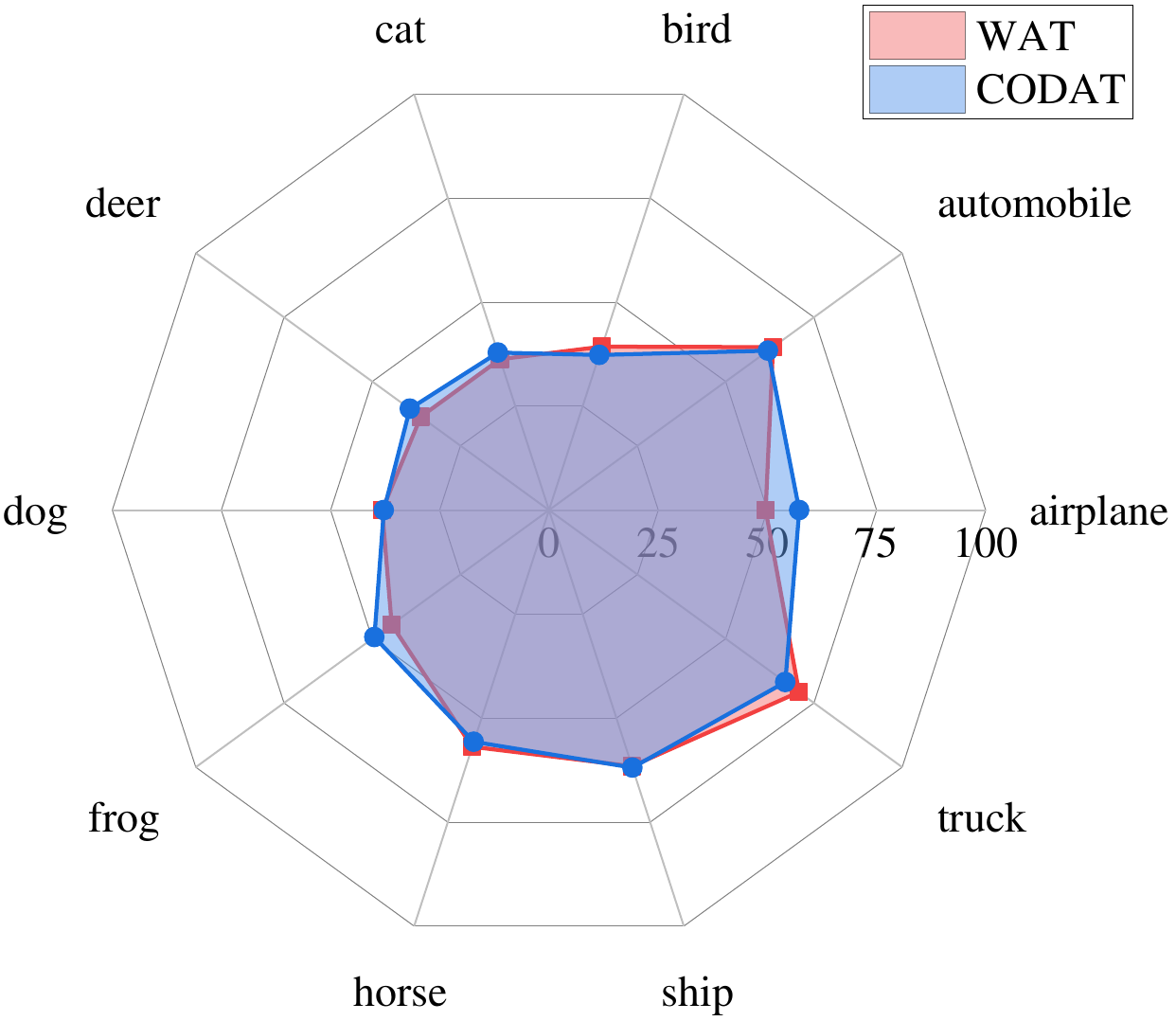}
        \label{fig:4-6}
    }
    \caption{Comparative analysis of class-wise robust accuracy for our method and baselines on CIFAR-10 using ResNet-18 under PGD-100 attack.}
    \label{fig:4}
\end{figure*}

All the experiments are conducted on an Nvidia 3090 GPU, with Ubuntu 18.04 LTS as the operating system and PyTorch 1.12 employed as the deep learning framework.

\textbf{Datasets and architectures.} We conduct experiments on four benchmark datasets: CIFAR-10~\cite{krizhevsky2009learning}, CIFAR-100~\cite{krizhevsky2009learning}, SVHN~\cite{netzer2011reading}, and STL-10~\cite{Coates_Ng_Lee_2011}. Further details regarding each dataset are provided in Appendix~\ref{append:3}. In our experiments, we employ the ResNet-18~\cite{He_Zhang_Ren_Sun_2016} and ResNet-34~\cite{Zagoruyko_Komodakis_2016} model architectures to evaluate the performance of all the algorithms.

\textbf{Baselines.} The objective of CODAT is to improve the robust fairness of models. To this end, we meticulously select three SOTA methods as baselines for comparison. These include FRL-RWRM~\cite{xu2021robust}, BAT~\cite{sun2023improving}, and the recently introduced WAT~\cite{li2023wat}. Furthermore, we compare our method with two traditional adversarial training methods: Adversarial Training (AT)~\cite{2017arXiv170606083M} and TRADES~\cite{Zhang_Yu_Jiao_Xing_Ghaoui_Jordan_2019}.

\textbf{Training settings.} Following the widely used adversarial training settings in~\cite{Zhang_Yu_Jiao_Xing_Ghaoui_Jordan_2019}, we train all the models using SGD optimizer with momentum 0.9, weight decay $2\times 10^{-4}$, and batch size 128 for 100 epochs. The initial learning rate for experiments, except for those on the SVHN dataset where it is set to 0.01, is uniformly set to 0.1 for other datasets. A piece-wise constant decay learning rate scheduler is used, where the learning rate is reduced by a factor of 0.1 at two predetermined epochs during training, namely the 75th and the 90th epoch. All experiments utilize the PGD attack with a random start to generate adversarial examples during the training phase. We set $\epsilon=8/255$, the step size to $2/255$, and the number of steps to 10. For data augmentation, we employ a combination of $32\times32$ random crop with 4-pixel padding and random horizontal flip. For TRADES, we set the hyper-parameter $\beta=6.0$. Regarding FRL-RWRM, we set $\tau_1=\tau_2=0.05$, $\alpha_1=\alpha_2=0.05$, and epochs to 100. As for CFOL, we set $\eta=2\times 10^{-6}$ and $\gamma=0.5$. All baseline methods are implemented using their official publicly available code.

\textbf{Evaluation settings.} We evaluate the performance of the algorithms in terms of average (Avg.) and worst-class (Wst.) accuracy under both natural and adversarial scenarios. In line with~\cite{li2023wat}, we use three strong adversarial attacks, including PGD~\cite{2017arXiv170606083M}, CW~\cite{Carlini_Wagner_2017}, and AutoAttack (AA)~\cite{Croce_Hein_2020}, to evaluate the robustness. In particular, AA is known for its reliability in robustness evaluation. We set $\epsilon=8/255$, step size to $1/255$, and number of steps for PGD to 100, CW to 30. Regarding AA, the standard version is employed. We also use the FEC as defined in~\cref{eq:15} to comprehensively evaluate the algorithm both in the average and worst-class accuracies.

\subsection{Fairness comparison with baselines}\label{sec:5-2}
We conduct a systematic evaluation of the robust fairness of each method. The performance of each method on CIFAR-10, CIFAR-100, SVHN, and STL-10 datasets with ResNet-18 is presented in Tables~\ref{tab:1} through~\ref{tab:4}, respectively. It can be observed that our method significantly outperforms the baselines across the four benchmark datasets. Following this observation, we proceed to present a comprehensive analysis of the experimental results for each dataset.

\textbf{CIFAR-10.} In comparison to the standard AT, all other methods have demonstrated an improvement in the worst-class robust accuracy, with CODAT shown the most significant enhancement, as depicted in~\cref{tab:1}. Specifically, CODAT outperforms other methods in terms of worst-class robust accuracy, demonstrating an improvement of at least 1\% under PGD-100 attack, a similar improvement under CW-30 attack, and achieving a minimum of 1.4\% increase under AA attack. Regarding average robust accuracy, while TRADES achieves the highest performance across all attacks, CODAT stands out with the largest FEC across all scenarios. This suggests that CODAT is capable of achieving the greatest improvement in worst-class robust accuracy with the least compromise on the average robust accuracy compared to the baseline methods.

\textbf{CIFAR-100.} As detailed in~\cref{tab:2}, CODAT substantially improves the worst-class robust accuracy. In fact, the worst-class robust accuracy of CODAT is superior to that of all the baselines under all three attacks. Although BAT achieves a comparable worst-class robust accuracy under PGD-100 attack, CODAT significantly outperforms it under CW-30 and AA attacks. CFOL achieves a level of performance that is comparable to that of CODAT in terms of worst-class robust accuracy. Nevertheless, the average robust accuracy of CODAT exceeds that of CFOL by 3.39\%, 0.91\%, and 2.18\% for each respective attack. This indicates that CODAT achieves comparable improvements in worst-class robust accuracy with a relatively smaller decrease in average robust accuracy compared to CFOL. Conversely, WAT exhibits a slight superiority in average robust accuracy, yet CODAT consistently maintains a higher worst-class robust accuracy. Moreover, CFOL exhibits the highest FEC across all attack scenarios. This suggests that, in comparison to WAT, CODAT improves worst-class robust accuracy at a more accelerated rate than the decline in average robust accuracy.

\textbf{SVHN and STL10.} Similar improvements are observed on SVHN and STL-10 datasets, as illustrated in Tables~\ref{tab:3} and~\ref{tab:4}. Compared to baselines, CODAT achieves a minimum improvement of 5.71\%, 3.36\%, and 5.30\% in worst-class robust accuracy under PGD-100, CW-30, and AA attacks on SVHN dataset, respectively, and a minimum improvement of 1.87\%, 1.25\%, and 1.62\% on STL10 dataset, respectively. Consistent with the observations on CIFAR-10 dataset, TRADES achieves the highest average robust accuracy on both SVHN and STL10 datasets, while CODAT exhibits the optimal FEC. This pattern reaffirms CODAT's capability to achieve the highest worst-class robust accuracy with the least compromise to average robust accuracy.

It can also be observed that there are variations in the performance of algorithms on natural data of different datasets. BAT exhibits superior performance in terms of both average natural accuracy and the worst-class natural accuracy on CIFAR-10 and CIFAR-100 datasets. In contrast, FRL-RWRM shows superior performance on SVHN and STL10. Nevertheless, CODAT consistently attains competitive performance. Moreover, the disparity in FEC between CODAT and BAT as well as FRL-RWRM on natural data is considerably smaller than that on adversarial examples. For instance, on CIFAR-100, the FEC gap between CODAT and BAT is -0.22 on natural data, yet this gap widens to 1.78 under AA attack scenario. This indicates that CODAT, in comparison to SOTA methods, can maintain a competitive performance on natural data with only a slight reduction, while significantly improving performance on adversarial data.

Furthermore, to further validate the effectiveness of CODAT, we evaluate the performance of all methods on CIFAR-10 dataset using a larger WideResNet-34-10 model, with the results detailed in~\cref{tab:5}. As illustrated in~\cref{tab:5}, with the employment of a larger model, CODAT can further increase the gap in worst-class robust accuracy relative to the baselines. Specifically, when using WideResNet-34-10 model, CODAT outperforms other methods in worst-class robust accuracy by a minimum improvement of 5.4\% under PGD-100 attack, 2.8\% under CW-30 attack, and 3.1\% under AA attack. This performance is significantly superior to those observed with ResNet-18 model. The notable improvement may be attributed to the fact that, with a larger model, CODAT is able to leverage more model capacity to focus on more challenging classes, which further demonstrates its superiority.

\subsection{Comparison of class-wise robust accuracy}\label{sec:5-3}
To conduct a fine-grained study on the effectiveness of CODAT in enhancing robust fairness, we perform a horizontal comparative analysis of class-wise robust accuracy with baseline methods on CIFAR-10 using ResNet-18 under PGD-100 attack in~\cref{fig:4}. As shown in~\cref{fig:4-1} and~\cref{fig:4-2}, although our method exhibits a slight decrease in robust accuracy for some well-performing classes compared to traditional adversarial training methods such as AT and TRADES, it achieves significant improvements in the most vulnerable classes, specifically cat and deer, thereby enhancing robust fairness. The remaining figures in~\cref{fig:4} demonstrate that, with the exception of the class automobile, our method outperforms SOTA methods in almost all classes. This provides substantial evidence of the effectiveness of CODAT in improving robust fairness. 

Moreover, \cref{fig:4} illustrates that the robust accuracy curves of CODAT across different classes approximate a circular shape, which indicating a balanced performance across all classes. This observation is further supported by the class-wise robust accuracy variance in~\cref{fig:5}. \cref{fig:5} presents the variance in class-wise robust accuracy for all methods under PGD-100, CW-30, and AA attacks, among which CODAT exhibits the lowest variance among all attacks. This finding further substantiates the effectiveness of CODAT.

\begin{figure}
    \centering
    \includegraphics[width=0.9\linewidth]{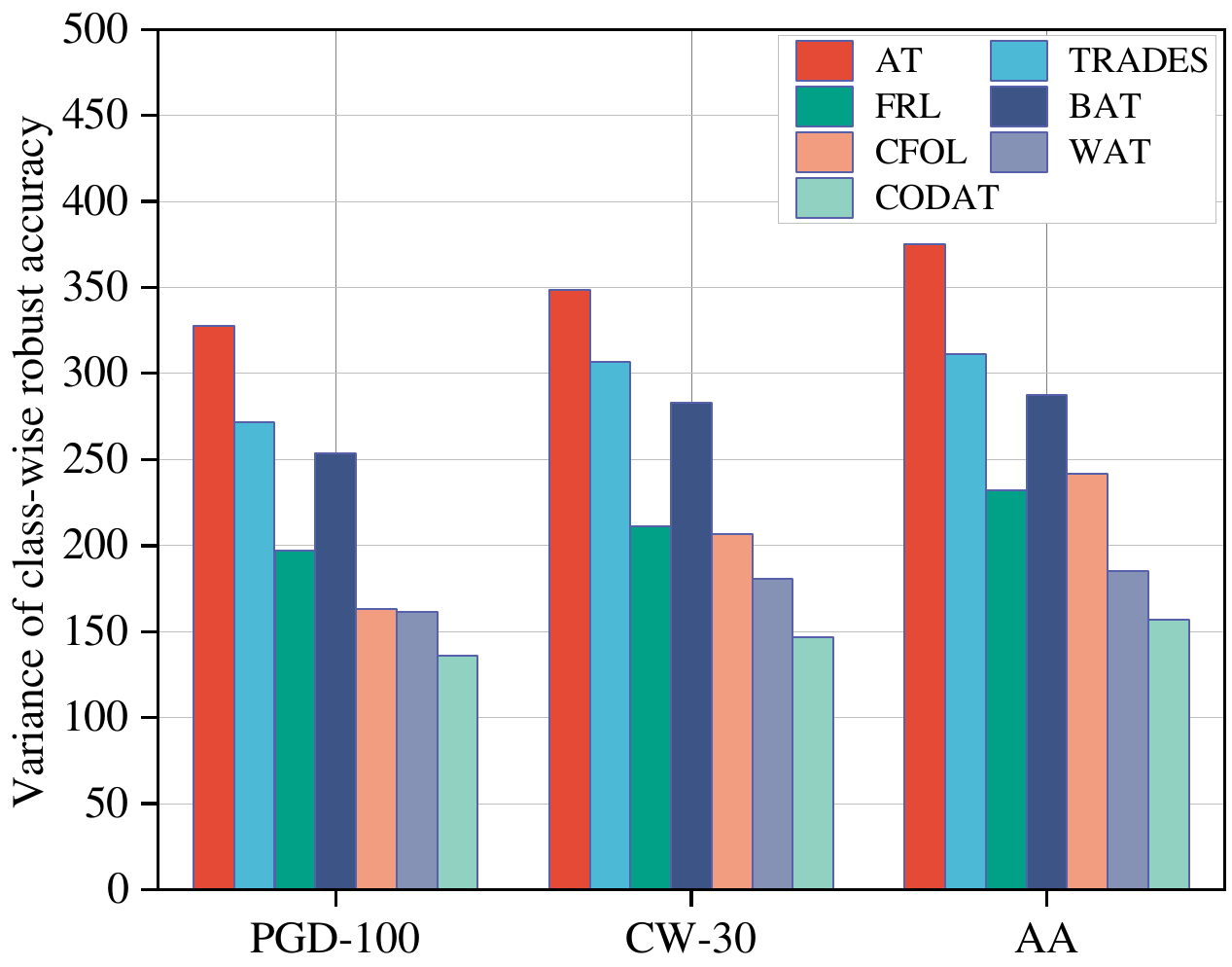}
    \caption{The variance of class-wise robust accuracy for all methods on CIFAR-10 using ResNet-18 under PGD-100, CW-30, and AA attacks, respectively. A lower variance indicates better robust fairness.}
    \label{fig:5}
\end{figure}

\begin{figure}
    \centering
    \includegraphics[width=0.9\linewidth]{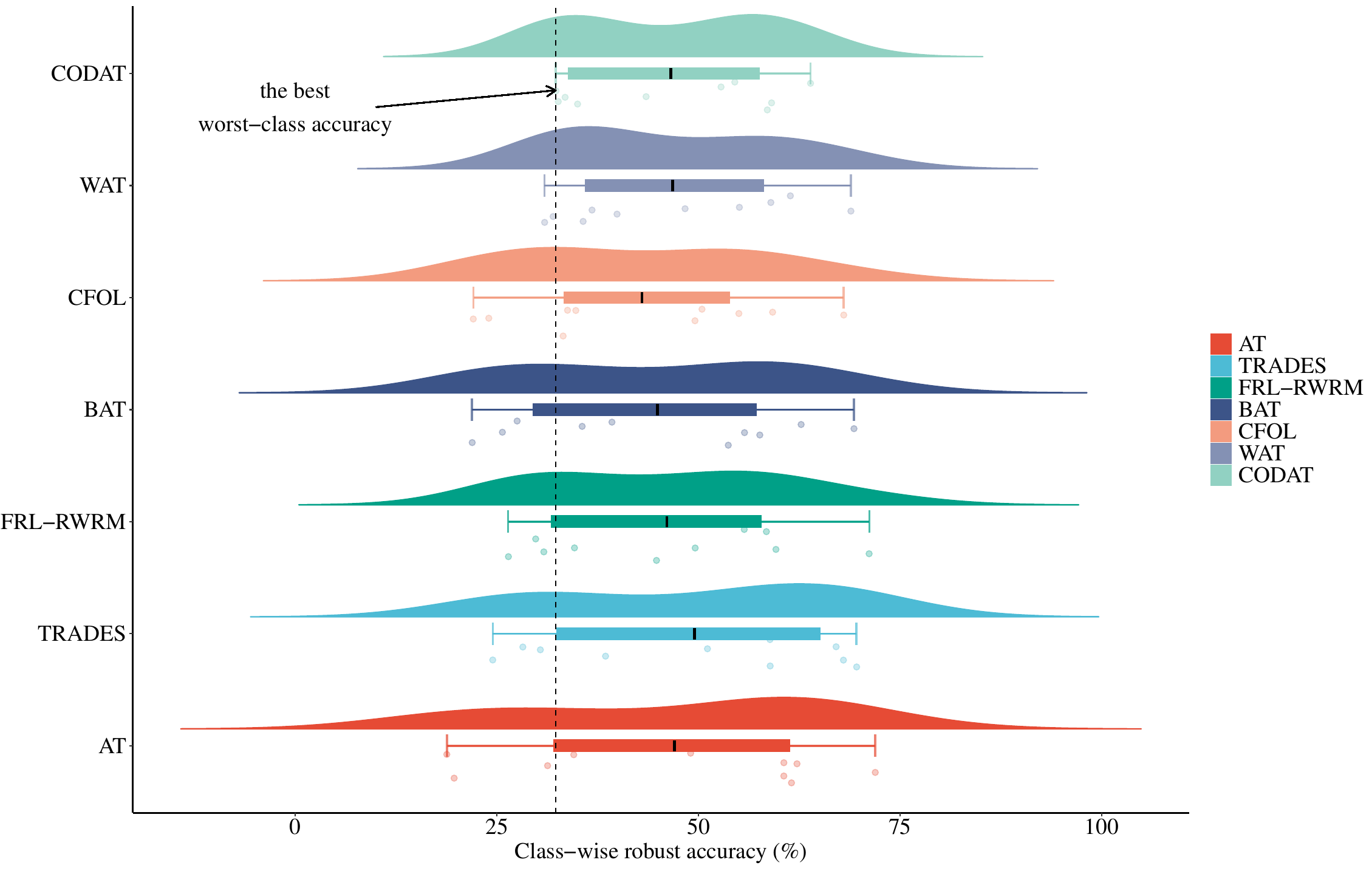}
    \caption{Distribution of class-wise robust accuracy for all methods on CIFAR-10 using ResNet-18 under AA attack.}
    \label{fig:6}
\end{figure}

\begin{figure*}
    \centering
    \subfloat[Variations in average robust accuracy and worst-class robust accuracy under PGD-100 attack.]{
    \includegraphics[width=.45\linewidth]{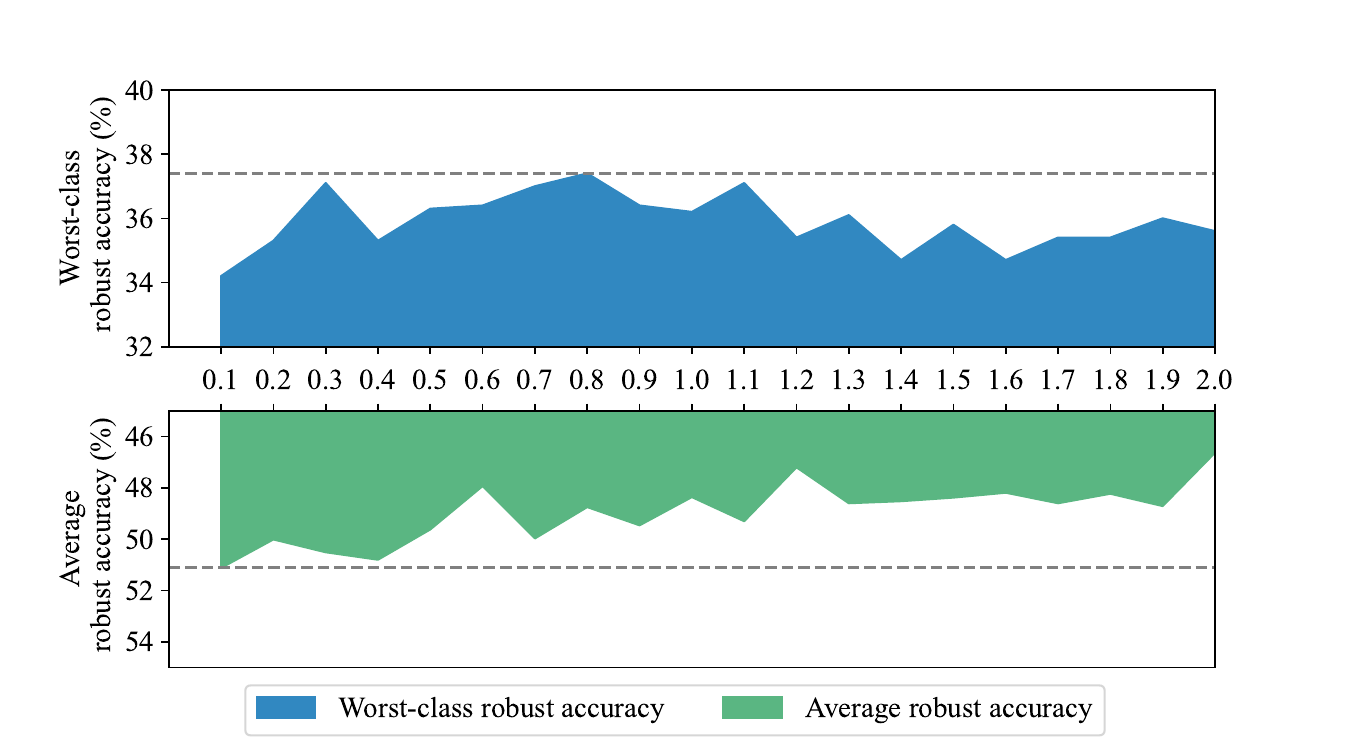}
    \label{fig:7-1}
    }
    \hfil
    \subfloat[Variations in average robust accuracy and worst-class robust accuracy under CW-30 attack.]{
    \includegraphics[width=.45\linewidth]{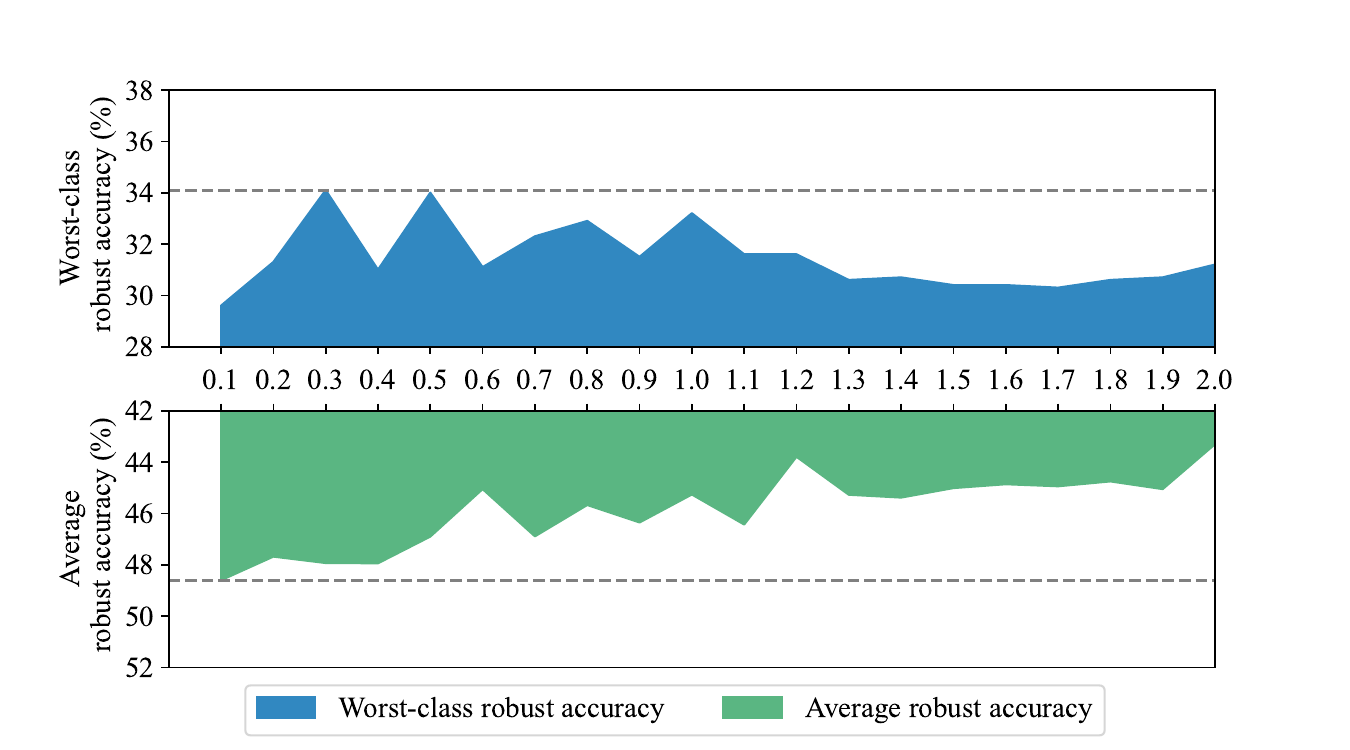}
    \label{fig:7-2}
    }
    \caption{The influence of hyper-parameter $\eta$ on the average robust accuracy and worst-class robust accuracy on CIFAR-10 using ResNet-18 under PGD-100 and CW-30 attacks.}
    \label{fig:7}
\end{figure*}

\subsection{Comparison of the class-wise robust accuracy distribution}\label{sec:5-4}
Furthermore, we conduct a vertical comparative analysis of the class-wise robust accuracy distribution for all methods on CIFAR-10 using ResNet-18 under AA attack, as depicted in~\cref{fig:6}. Firstly, it is evident that CODAT has the smallest range among all methods, indicating a more concentrated distribution of robust accuracies across classes. This is consistent with the findings presented in~\cref{fig:4} and~\cref{fig:5}. Subsequently, the class-wise robust accuracy for all methods exhibits a bimodal distribution pattern. In particular, AT exhibits a higher peak at approximately 60\% and a lower peak at around 23\%, while TRADES displays a higher peak near 64\% and a lower peak near 27\%. The substantial separation between these two peaks results in a diminished level of robust fairness for AT and TRADES, despite their high robustness. In contrast, the intervals between the two peaks are significantly narrower for the other methods, with CODAT exhibiting the smallest interval and the highest peak. This suggests that CODAT may exhibit a slight decline in robust accuracy for the best-performing classes, but substantially improves robust accuracy for the worst-performing classes. 

Additionally, \cref{fig:6} illustrates that, although CODAT's average robust accuracy is slightly lower than that of TRADES, it is generally comparable to that of other methods. This indicates that CODAT can effectively allocate the limited model capacity, maintaining average robust accuracy while significantly improving the robust accuracy of the worst-performing classes, thereby enhancing the model's robust fairness.

\subsection{Hyper-parameter sensitivity analysis on \texorpdfstring{$\eta$}{}}\label{sec:5-5}
In the field of distributionally robust optimization, the constraint of the ambiguity set plays a pivotal role. We investigate the influence of the hyper-parameter $\eta$ within the CODAT framework on both the average robust accuracy and the worst-class robust accuracy through numerical experiments on CIFAR-10 using ResNet-18. Specifically, we train the model with $\eta\in\{0.1, 0.2, \cdots, 2.0\}$, and then evaluate its average robust accuracy and worst-case class robust accuracy under PGD-100 and CW-30 attacks, as shown in~\cref{fig:7}.

\cref{fig:7} illustrates that the worst-class robust accuracy initially increases and then decreases as the hyper-parameter $\eta$ increases. This trend may be attributed to the limited exploration space of the CODAT framework at lower values of $\eta$, which leads to a smaller ambiguity set. Consequently, the model fails to learn an optimal adversarial distribution, resulting in suboptimal performance. As $\eta$ increases, the exploration space broadens, providing the model with a greater opportunity to learn a superior distribution and thus improving its performance. However, an excessively large $\eta$ may result in an overly broad exploration space, which may cause the model to become overly conservative and lead to a decline in performance, potentially culminating in training failure. In contrast, the average robust accuracy demonstrates a consistent decline as $\eta$ increases.

Moreover, during the initial and intermediate stages of the increase in $\eta$, a trade-off is observable between the average robust accuracy and the worst-class robust accuracy. The improvement in the worst-class robust accuracy is accompanied by a reduction in the average robust accuracy. In the later stages, the conservative model results in a decline in both the robust accuracy and the average robust accuracy. To balance the robust accuracy and the average robust accuracy in our experiments, we determine the optimal hyper-parameter $\eta$ settings for different models. In particular, when employing ResNet-18 model, $\eta$ is optimally set to 0.3. Conversely, when utilizing WideResNet-34-10 model, the optimal setting for $\eta$ is 0.8.

\section{Conclusion}\label{sec:6}
To enhance the robust fairness of adversarial training, this paper proposes a novel class optimal distribution adversarial training framework which is grounded in the principles of distributionally robust optimization. We derive a closed-form optimal solution to the internal maximization problem, establishing a theoretical foundation for the joint optimization of class weights and model parameters. Moreover, we propose a fairness elasticity coefficient to assess the algorithm's performance, taking into account both the average accuracy and the worst-class robust accuracy. The experimental results demonstrate the effectiveness of our approach in enhancing robust fairness.

In the future, we intend to investigate the underlying causes of the bimodal distribution phenomenon observed in the class-wise robust accuracy within adversarial training and to develop strategies to further enhance robust fairness.

\section*{Acknowledgements}
This work is supported by the Program of Song Shan Laboratory (Included in the management of the Major Science and Technology Program of Henan Province) (2211002-10700-03).



\bigskip
\bibliography{fairness}

\clearpage

\begin{appendices}
\onecolumn
\section{Proof of Theorem~\ref{theorem:1}}\label{append:1}
\setcounter{theorem}{0}
\begin{theorem}
    If~\cref{assumption:1} holds true, the closed-form optimal solution to the inner maximization problem within~\cref{eq:5} is
    \begin{equation}
        \begin{split}
        p^*(\xi)=p_0(\xi) 
        +p_0(\xi)\sqrt{\frac{\eta}{{\rm{Var}}_{P_0}[H(\boldsymbol{\theta},\xi)]}}\left\{H(\boldsymbol{\theta},\xi)-\mathbb{E}_{P_0}[H(\boldsymbol{\theta},\xi)]\right\}. \nonumber
        \end{split}
    \end{equation}
\end{theorem}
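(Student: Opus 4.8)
The plan is to solve the inner maximization
\[
\max_{P} \; \mathbb{E}_P[H(\boldsymbol{\theta},\xi)] \quad \text{subject to} \quad D(P\|P_0)\leq \eta, \;\; \sum_{\xi}p(\xi)=1, \;\; p(\xi)\geq 0,
\]
via the method of Lagrange multipliers, treating the $\chi^2$-divergence constraint and the normalization constraint explicitly and (as the paper's remark about eq.~(31),(33) suggests) temporarily ignoring the nonnegativity constraint, to be checked \emph{a posteriori}. First I would write the Lagrangian
\[
\mathcal{L}(P,\alpha,\beta) = \sum_{\xi} p(\xi) H(\boldsymbol{\theta},\xi) - \alpha\Big(\sum_{\xi}\frac{[p(\xi)-p_0(\xi)]^2}{p_0(\xi)} - \eta\Big) - \beta\Big(\sum_{\xi}p(\xi) - 1\Big),
\]
with $\alpha\geq 0$. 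Setting $\partial\mathcal{L}/\partial p(\xi)=0$ gives $H(\boldsymbol{\theta},\xi) - 2\alpha\,\frac{p(\xi)-p_0(\xi)}{p_0(\xi)} - \beta = 0$, i.e.
\[
p(\xi) = p_0(\xi) + \frac{p_0(\xi)}{2\alpha}\big(H(\boldsymbol{\theta},\xi) - \beta\big).
\]

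The next step is to pin down $\beta$ and $\alpha$ from the two constraints. Summing the stationarity equation over $\xi$ and using $\sum_\xi p(\xi)=\sum_\xi p_0(\xi)=1$ forces $\sum_\xi p_0(\xi)(H(\boldsymbol{\theta},\xi)-\beta)=0$, hence $\beta = \mathbb{E}_{P_0}[H(\boldsymbol{\theta},\xi)]$; so the correction term is exactly the centered variable $H(\boldsymbol{\theta},\xi)-\mathbb{E}_{P_0}[H]$. Substituting back into the divergence constraint, which will be active at the optimum (the objective is linear and increasing in the ``spread'' of $P$, so the maximizer saturates $D(P\|P_0)=\eta$), yields
\[
\sum_\xi \frac{1}{p_0(\xi)}\cdot\frac{p_0(\xi)^2}{4\alpha^2}\big(H(\boldsymbol{\theta},\xi)-\mathbb{E}_{P_0}[H]\big)^2 = \frac{1}{4\alpha^2}\,\mathrm{Var}_{P_0}(H(\boldsymbol{\theta},\xi)) = \eta,
\]
so $2\alpha = \sqrt{\mathrm{Var}_{P_0}(H(\boldsymbol{\theta},\xi))/\eta}$ (taking the positive root, consistent with $\alpha\geq 0$ and with the objective being maximized rather than minimized — the sign choice is what makes $P$ put more mass on high-$H$ classes). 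Plugging $2\alpha$ and $\beta$ into the expression for $p(\xi)$ gives precisely eq.~(8); I would also record that $\mathrm{Var}_{P_0}(H)>0$ is implicitly needed (if the variance is zero, $H$ is $P_0$-a.s.\ constant and $P=P_0$ trivially), and Assumption~\ref{assumption:1} guarantees finiteness so all the sums above converge.

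Finally I would justify that this stationary point is the genuine maximizer: the feasible set $\{P : D(P\|P_0)\leq\eta,\ \sum p(\xi)=1,\ p(\xi)\geq 0\}$ is convex and compact, the objective is linear, so a maximizer exists and KKT conditions are necessary and (by concavity, trivially, of a linear objective over a convex set together with the fact that the divergence constraint is convex) sufficient; the candidate satisfies stationarity with $\alpha\geq 0$ and complementary slackness ($D=\eta$), so it is optimal provided it is feasible. The main obstacle I anticipate is precisely this feasibility check: one must verify $p^*(\xi)\geq 0$ for all $\xi$, which amounts to $1 + \sqrt{\eta/\mathrm{Var}_{P_0}(H)}\,(H(\boldsymbol{\theta},\xi)-\mathbb{E}_{P_0}[H]) \geq 0$, i.e.\ $\eta$ small enough relative to how far below the mean the smallest $H$ value sits (a bound like $\eta \leq p_0^{\min}$-type condition, or more precisely $\eta$ no larger than $\mathrm{Var}_{P_0}(H)/(\mathbb{E}_{P_0}[H]-\min_\xi H)^2$). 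If this fails the true solution lies on the boundary $p^*(\xi)=0$ for some classes and the clean closed form breaks; the paper's Section~\ref{sec:4-3} argument (that the used $\eta<1$ is far below the Dirac divergence $K-1$) is the informal substitute for this check, and I would note that a fully rigorous statement should carry the nonnegativity hypothesis explicitly or restrict $\eta$ accordingly.
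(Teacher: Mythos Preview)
Your proposal is correct and follows essentially the same Lagrangian route as the paper: the paper reparametrizes via the likelihood ratio $L(\xi)=p(\xi)/p_0(\xi)$ and proceeds through a two-stage dual (dualize the $\chi^2$ constraint, solve the inner maximization in $L$ for fixed $\alpha$, then minimize the resulting expression over $\alpha$), whereas you work directly with $p(\xi)$ and pin down both multipliers from primal feasibility and complementary slackness; the stationarity equation, the identification $\beta=\mathbb{E}_{P_0}[H]$, and the value $2\alpha=\sqrt{\mathrm{Var}_{P_0}(H)/\eta}$ coincide exactly with the paper's eqs.~(27)--(31). Your explicit treatment of the nonnegativity constraint $p^*(\xi)\geq 0$ and the need for $\mathrm{Var}_{P_0}(H)>0$ is more careful than the paper's own proof, which defines $\mathbb{L}_0=\{L\geq 0\}$ but never verifies the candidate lies in it.
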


\begin{proof}
    Let $L(\xi)=\frac{p(\xi)}{p_0(\xi)}$, then $L(\cdot)$ is a Radon-Nikodym derivative. We can easily observe that $L(\xi)\geq 0$ and $\mathbb{E}_{P_0}[L(\xi)]=1$.

    Employing the change-of-measure technique on the objective function of~\cref{eq:5}, we can obtain
    \begin{equation}\label{eq:16}
        \begin{split}
            \mathbb{E}_P[H(\boldsymbol{\theta},\xi)]&=\int_\Xi p(\xi) H(\boldsymbol{\theta}, \xi) d\xi \\
            &= \int_\Xi \frac{p(\xi)}{p_0(\xi)} H(\boldsymbol{\theta},\xi)p_0(\xi)d\xi \\
            &= \int_\Xi L(\xi)H(\boldsymbol{\theta}, \xi) p_0(\xi) d\xi \\
            &=\mathbb{E}_{P_0}[L(\xi)H(\boldsymbol{\theta},\xi)].
        \end{split}
    \end{equation}

    Similarly, we have
    \begin{equation}\label{eq:17}
        \begin{split}
            D(P||P_0)&=\int_\Xi \frac{[p(\xi)-p_0(\xi)]^2}{p_0(\xi)} d\xi \\
            &=\int_\Xi \frac{[p(\xi)-p_0(\xi)]^2}{[p_0(\xi)]^2} p_0(\xi) d\xi \\
            &=\mathbb{E}_{P_0}[(L(\xi)-1)^2].
        \end{split}
    \end{equation}

    Then, we can reformulate the inner maximization problem of~\cref{eq:5} as
    \begin{equation}\label{eq:18}
        \begin{array}{cc}
             \max\limits_{L(\xi)\in\mathbb{L}}\mathbb{E}_{P_0}[L(\xi)H(\boldsymbol{\theta},\xi)],  \\
             \text{s.t.}\quad \mathbb{E}_{P_0}[(L(\xi)-1)^2]\leq \eta,
              
        \end{array}
    \end{equation}
    where $\mathbb{L}=\{L:\mathbb{E}_{P_0}[L(\xi)]=1,L(\xi)\geq 0\}$ denotes the set of Randon-Nikodym derivatives generated by $P$ such that $P$ is absolutely continuous with respect to $P_0$. Therefore, we transform the optimization problem concerning $P$ (the inner maximization problem of~\cref{eq:5}) into one concerning $P_0$ (~\cref{eq:18}).

    ~\cref{eq:18} is a convex optimization problem. Let
    \begin{equation}
        l_0(\alpha, L)=\mathbb{E}_{P_0}[L(\xi)H(\boldsymbol{\theta},\xi)]-\alpha\{\mathbb{E}_{P_0}[(L(\xi)-1)^2]-\eta\} \nonumber
    \end{equation}
    be the Lagrangian function associated to~\cref{eq:18}. Then~\cref{eq:18} is equivalent to
    \begin{equation}
        \label{eq:19}
        \max\limits_{L(\xi)\in \mathbb{L}} \min\limits_{\alpha\geq 0} l_0(\alpha, L).
    \end{equation}

    Swapping the positions of the maximization and minimization operators, we get the Largrangian dual of~\cref{eq:19}
    \begin{equation}
        \label{eq:20}
        \min\limits_{\alpha\geq 0} \max\limits_{L(\xi)\in \mathbb{L}} l_0 (\alpha, L).
    \end{equation}

    The inner maximization problem of~\cref{eq:20} can be formulated as
    \begin{equation}
        \label{eq:21}
        \begin{array}{cc}
             \max\limits_{L(\xi)\in\mathbb{L}_0}\mathbb{E}_{P_0}[L(\xi)H(\boldsymbol{\theta},\xi)-\alpha (L(\xi)-1)^2]+\alpha \eta,  \\
             \text{s.t.}\quad \mathbb{E}_{P_0}[L(\xi)] = 1, 
        \end{array}
    \end{equation}
    where we define $\mathbb{L}_0=\{L\in\mathbb{L}(P_0):L\geq 0,\text{a.s.}\}$.

    ~\cref{eq:21} is also a convex optimization problem. By employing Lagrange multiplier method, we get
    \begin{equation}
        \label{eq:22}
        \begin{split}
        J(L(\xi),\lambda)=\mathbb{E}_{P_0} [L(\xi)H(\boldsymbol{\theta},\xi)-\alpha(L(\xi)-1)^2]+\alpha\eta 
        +\lambda (\mathbb{E}_{P_0}[L(\xi)]-1).
        \end{split}
    \end{equation}

    According to $\nabla_L J(L(\xi),\lambda)=0$ and $\nabla_\lambda J(L(\xi),\lambda)=0$, for $\forall u,v \in [K]$, we have
    \begin{equation}
        \label{eq:23}
        H(\boldsymbol{\theta},u)-2\alpha L(u)=H(\boldsymbol{\theta},v)-2\alpha L(v).
    \end{equation}

    Therefore, we can obtain
    \begin{equation}
        \label{eq:24}
        L(u) = L(v) +\frac{1}{2\alpha}[H(\boldsymbol{\theta},u)-H(\boldsymbol{\theta},v)].
    \end{equation}

    Multiplying both sides of~\cref{eq:24} by $p_0(u)$, we get
    \begin{equation}
        \label{eq:25}
        p_0(u)L(u)=p_0(u)L(v)+\frac{p_0(u)}{2\alpha}[H(\boldsymbol{\theta},u)-H(\boldsymbol{\theta},v)].
    \end{equation}

    Summing over $u\in [K]$, we can yield
    \begin{equation}
        \label{eq:26}
        \begin{split}
        \sum\limits_{u}p_0(u)L(u) =L(v)\sum\limits_u p_0(u) 
        +\frac{1}{2\alpha}\sum\limits_u p_0(u)H(\boldsymbol{\theta},u) 
        -\frac{1}{2\alpha}H(\boldsymbol{\theta},v)\sum\limits_u p_0(u).
        \end{split}
    \end{equation}

    From $\mathbb{E}_{P_0}[L(\xi)]=1$, we can get
    \begin{equation}
        \label{eq:27}
        L(v)=1+\frac{H(\boldsymbol{\theta},v)-\mathbb{E}_{P_0}[H(\boldsymbol{\theta},u)]}{2\alpha}.
    \end{equation}

    The closed-form optimal solution to the inner maximization problem of~\cref{eq:20} is
    \begin{equation}
        \label{eq:28}
        L^*(\xi) = 1+\frac{1}{2\alpha}[H(\boldsymbol{\theta},\xi)-\mathbb{E}_{P_0}[H(\boldsymbol{\theta},\xi)]].
    \end{equation}

    Substituting $L^*(\xi)$ into~\cref{eq:20}, we obtain
    \begin{equation}
        \label{eq:29}
        \min\limits_{\alpha\geq 0}\mathbb{E}_{P_0}[H(\boldsymbol{\theta},\xi)]+\frac{\text{Var}_{P_0}[H(\boldsymbol{\theta},\xi)]}{4\alpha}+\alpha \eta.
    \end{equation}

    \cref{eq:29} is a minimization problem concerning $\alpha$. Let
    \begin{equation}
        \label{eq:30}
        g(\alpha) = \mathbb{E}_{P_0}[H(\boldsymbol{\theta},\xi)]+\frac{\text{Var}_{P_0}[H(\boldsymbol{\theta},\xi)]}{4\alpha}+\alpha \eta.
    \end{equation}

    Taking the derivative of~\cref{eq:30} with respect to $\alpha$ and setting it to 0, we get
    \begin{equation}
        \label{eq:31}
        \alpha^*=\frac{1}{2}\sqrt{\frac{\text{Var}_{P_0}[H(\boldsymbol{\theta},\xi)]}{\eta}}.
    \end{equation}

    Substituting $\alpha^*$ into~\cref{eq:28} yields
    \begin{equation}
        \label{eq:32}
        \begin{split}
            L^*(\xi)=1 
            +\sqrt{\frac{\eta}{\text{Var}_{P_0}[H(\boldsymbol{\theta},\xi)]}}[H(\boldsymbol{\theta},\xi)-\mathbb{E}_{P_0}[H(\boldsymbol{\theta},\xi)]].
        \end{split}
    \end{equation}

    Therefore, the closed-form optimal solution $P$ is
    \begin{equation}
        \label{eq:33}
        \begin{split}
            p^*(\xi)&=p_0(\xi)L^*(\xi) \\
            &=p_0(\xi) 
            +p_0(\xi)\sqrt{\frac{\eta}{\text{Var}_{P_0}[H(\boldsymbol{\theta},\xi)]}}[H(\boldsymbol{\theta},\xi)-\mathbb{E}_{P_0}[H(\boldsymbol{\theta},\xi)]].
        \end{split}
    \end{equation}

    Substituting $\alpha^*$ into~\cref{eq:29} yields
    \begin{equation}
        \label{eq:34}
        \begin{split}
        &\min\limits_{\alpha\geq 0}\mathbb{E}_{P_0}[H(\boldsymbol{\theta},\xi)]+\frac{\text{Var}_{P_0}[H(\boldsymbol{\theta},\xi)]}{4\alpha}+\alpha \eta \\
        =&\mathbb{E}_{P_0}[H(\boldsymbol{\theta},\xi)]+\sqrt{\eta \text{Var}_{P_0}[H(\boldsymbol{\theta},\xi)]}.
        \end{split}
    \end{equation}

    Here, we can transform the~\cref{eq:20} into a deterministic one
    \begin{equation}
        \label{eq:35}
        \min\limits_{\alpha\geq 0}\max\limits_{L(\xi)\in \mathbb{L}}l_0 (\alpha, L) = \mathbb{E}_{P_0}[H(\boldsymbol{\theta},\xi)]+\sqrt{\eta \text{Var}_{P_0}[H(\boldsymbol{\theta},\xi)]}.
    \end{equation}

    Therefore, we have
    \begin{equation}
        \label{eq:36}
        \max\limits_{P\in \Delta}\mathbb{E}_P[H(\boldsymbol{\theta},\xi)] = \mathbb{E}_{P_0}[H(\boldsymbol{\theta},\xi)]+\sqrt{\eta \text{Var}_{P_0}[H(\boldsymbol{\theta},\xi)]},
    \end{equation}
    which completes the proof.  
\end{proof}

\section{More details about fairness elasticity coefficient}\label{append:2}
In the field of economics, the elasticity coefficient primarily is employed as a measure of the sensitivity of one variable to changes in another. Nevertheless, it should be noted that this elasticity coefficient can assume negative values, which introduces a further layer of complexity into the analysis. To streamline this process, we propose an approach that involves exponentiating the increments of both variables, followed by calculating their ratio. This method not only preserves information about the relative rates of change between the two variables but also effectively circumvents the issue of negative values. In particular, when the rate of change of the numerator is negative, it indicates that the worst-class accuracy has not improved. Following exponentiation, the value becomes a positive number less than one, resulting in a reduction in the FEC. Conversely, if the rate of change of the denominator is negative, it indicates an increase in the average accuracy. As this term is in the denominator and will be a positive number less than one after exponentiation, it results in an increase in the FEC. The utilization of this processing method facilitates a more straightforward and discernible interpretation of the FEC. Consequently, the performance of an algorithm can be directly inferred from the magnitude of the FEC.

\section{Detailed information about the datasets}\label{append:3}
In this paper, we conduct comprehensive experiments on four benchmark datasets: CIFAR-10, CIFAR-100, SVHN, and STL10. CIFAR-10 comprises a total of 60,000 images, evenly distributed across 10 distinct classes. Each class comprises 6,000 images, with 5,000 images allocated for training and 1,000 images for testing. CIFAR-100, with a similar number of images (60,000), is divided into 100 classes, each with 600 images. The training and testing splits for each class are 500 and 100 images, respectively. SVHN comprises 10 distinct classes, with a training set of 73,257 images and a test set of 26,032 images. The images in CIFAR-10, CIFAR-100, and SVHN are sized at $3\times 32\times 32$. STL10 comprises 13,000 images across 10 classes, with each class having 500 images allocated for training and 800 for testing. The image size for STL10 is $3\times 96 \times 96$.

\section{More experiments}\label{append:4}

\begin{figure}[H]
    \centering
    \subfloat[AT and CODAT]{
    \includegraphics[scale=0.22]{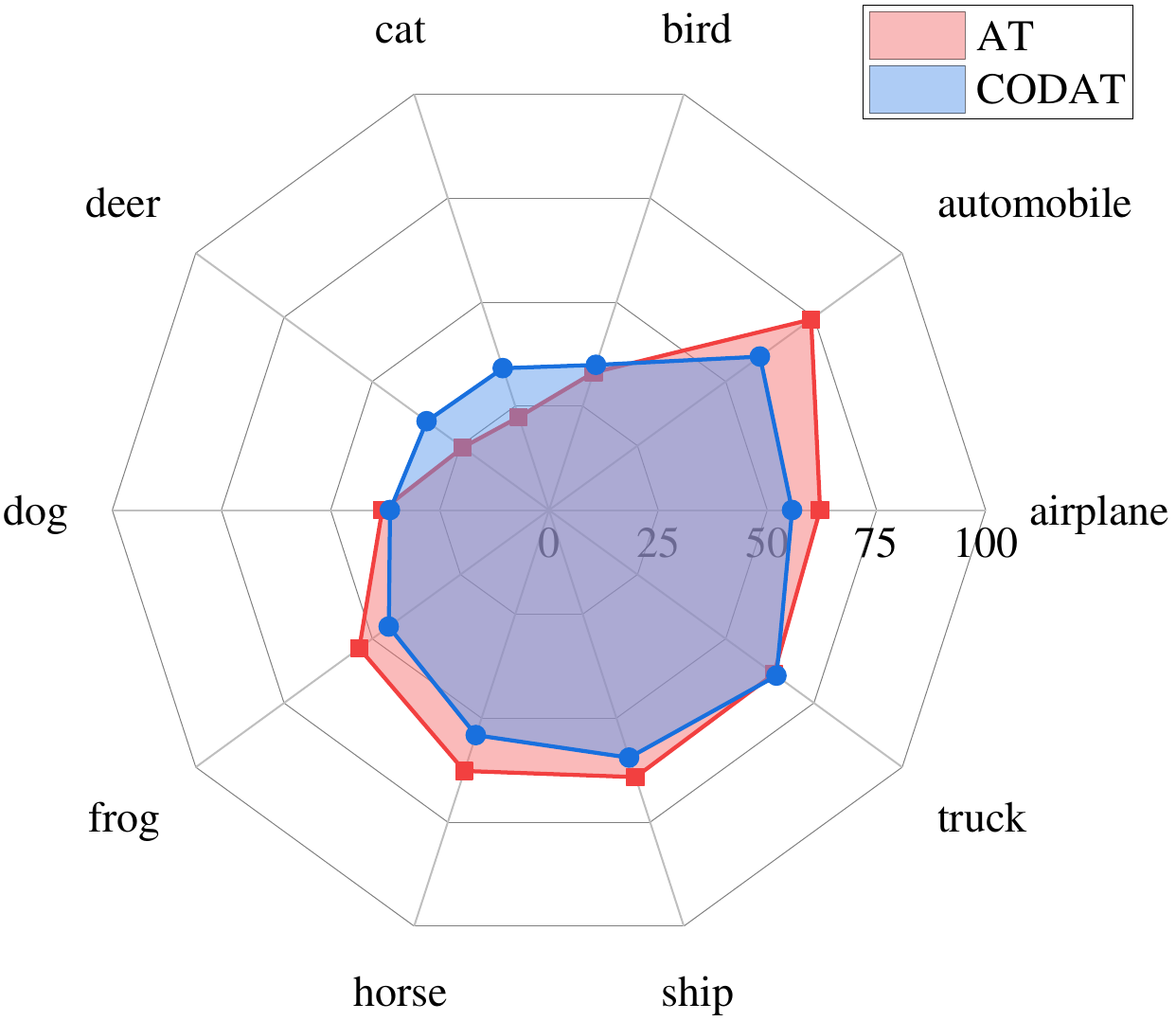}
    \label{fig:8-1}
    }
    \hfil
    \subfloat[TRADES and CODAT]{
    \includegraphics[scale=0.22]{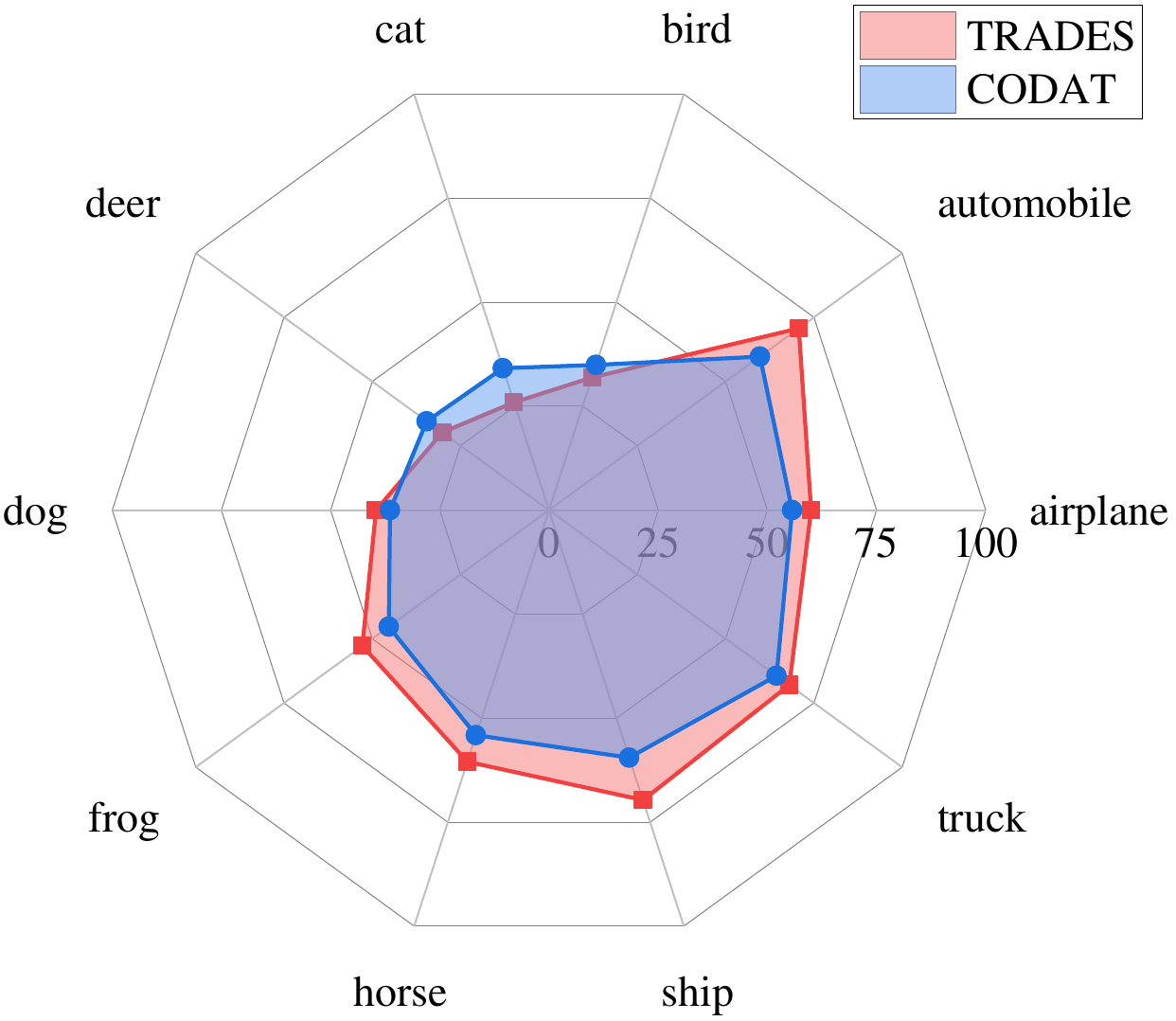}
    \label{fig:8-2}
    }
    \hfil
    \subfloat[FRL-RWRM and CODAT]{
    \includegraphics[scale=0.22]{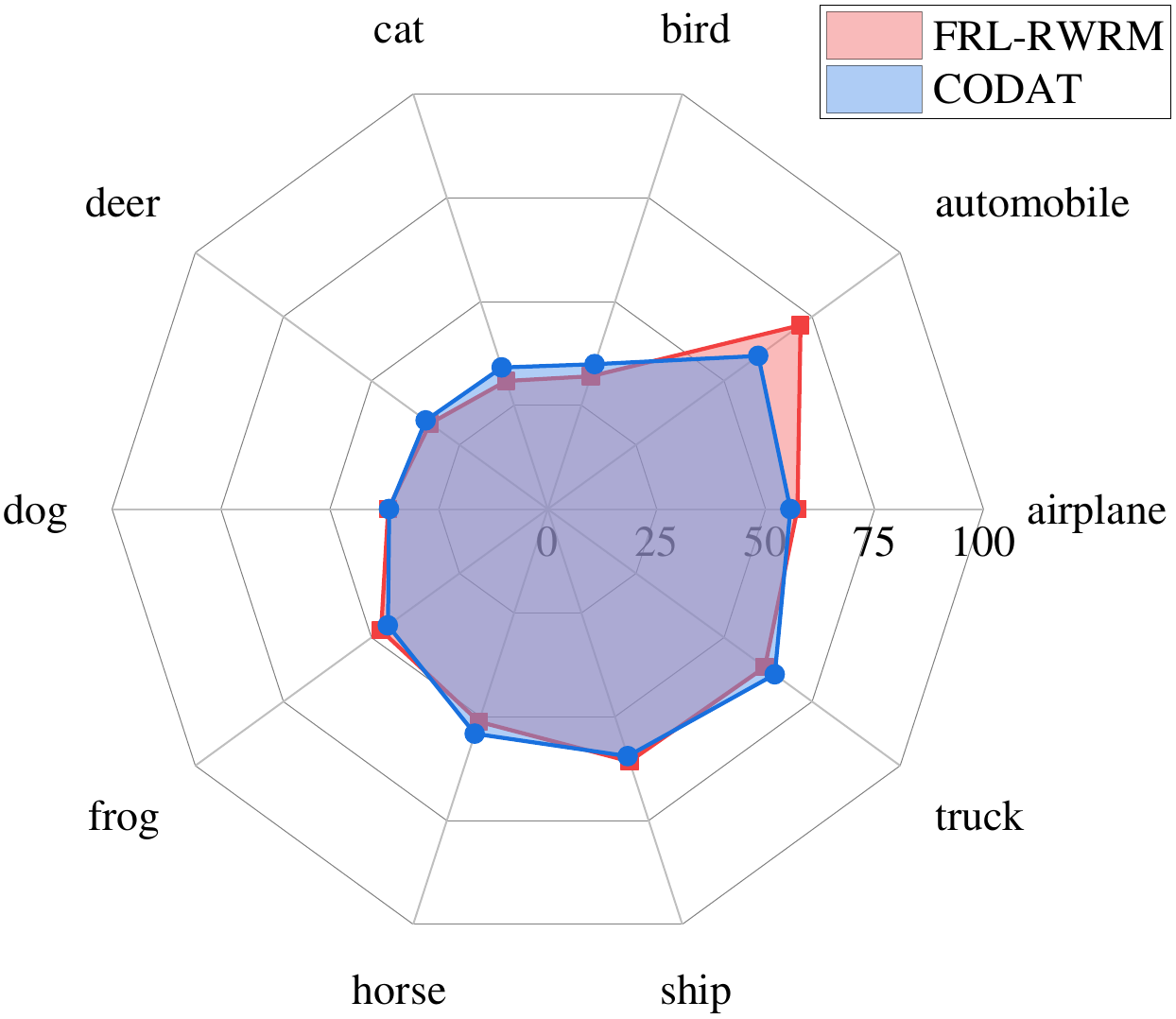}
    \label{fig:8-3}
    }
    \hfil

    \subfloat[BAT and CODAT]{
    \includegraphics[scale=0.22]{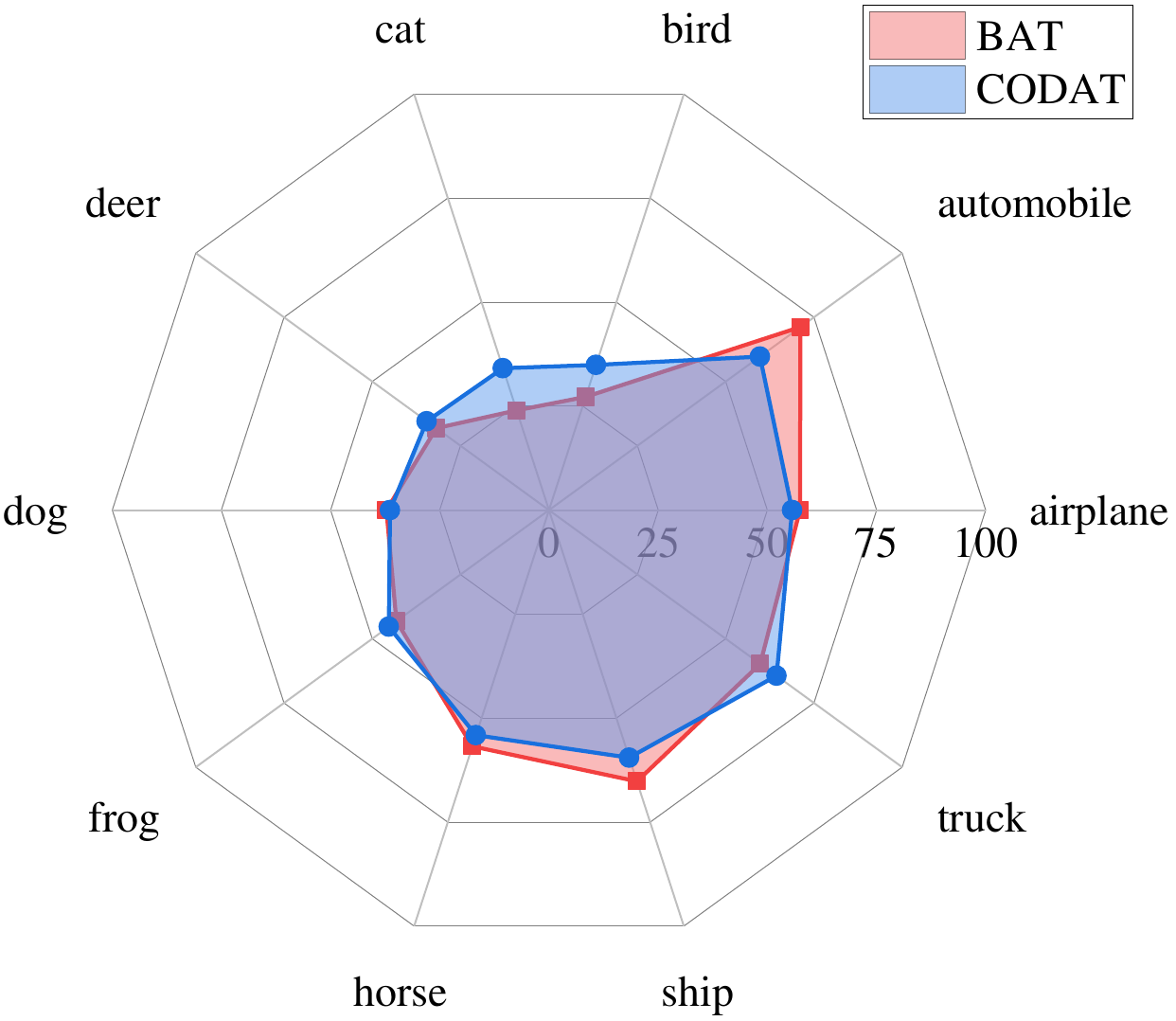}
    \label{fig:8-4}
    }
    \hfil\subfloat[CFOL and CODAT]{
    \includegraphics[scale=0.22]{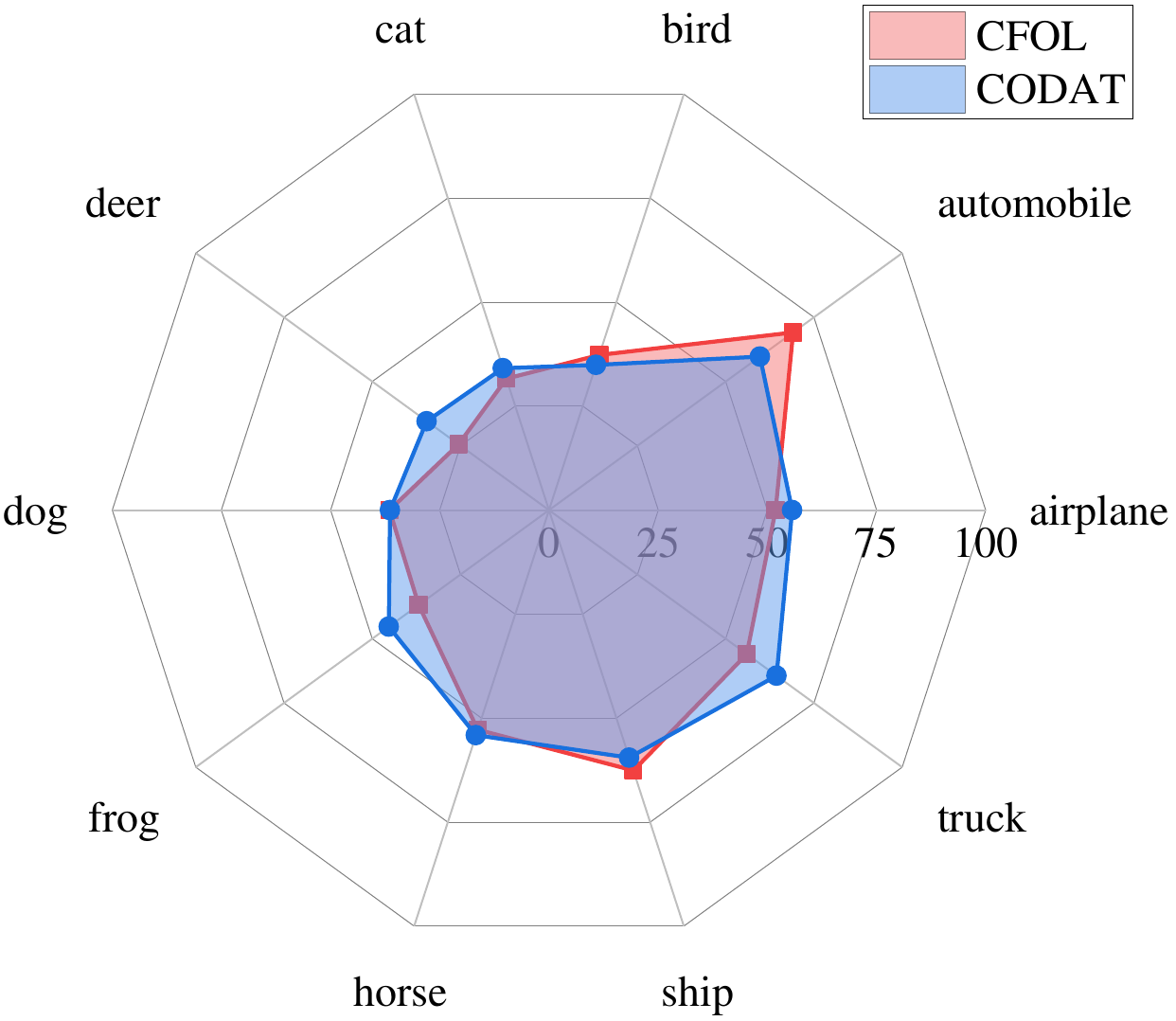}
    \label{fig:8-5}
    }
    \hfil\subfloat[WAT and CODAT]{
    \includegraphics[scale=0.22]{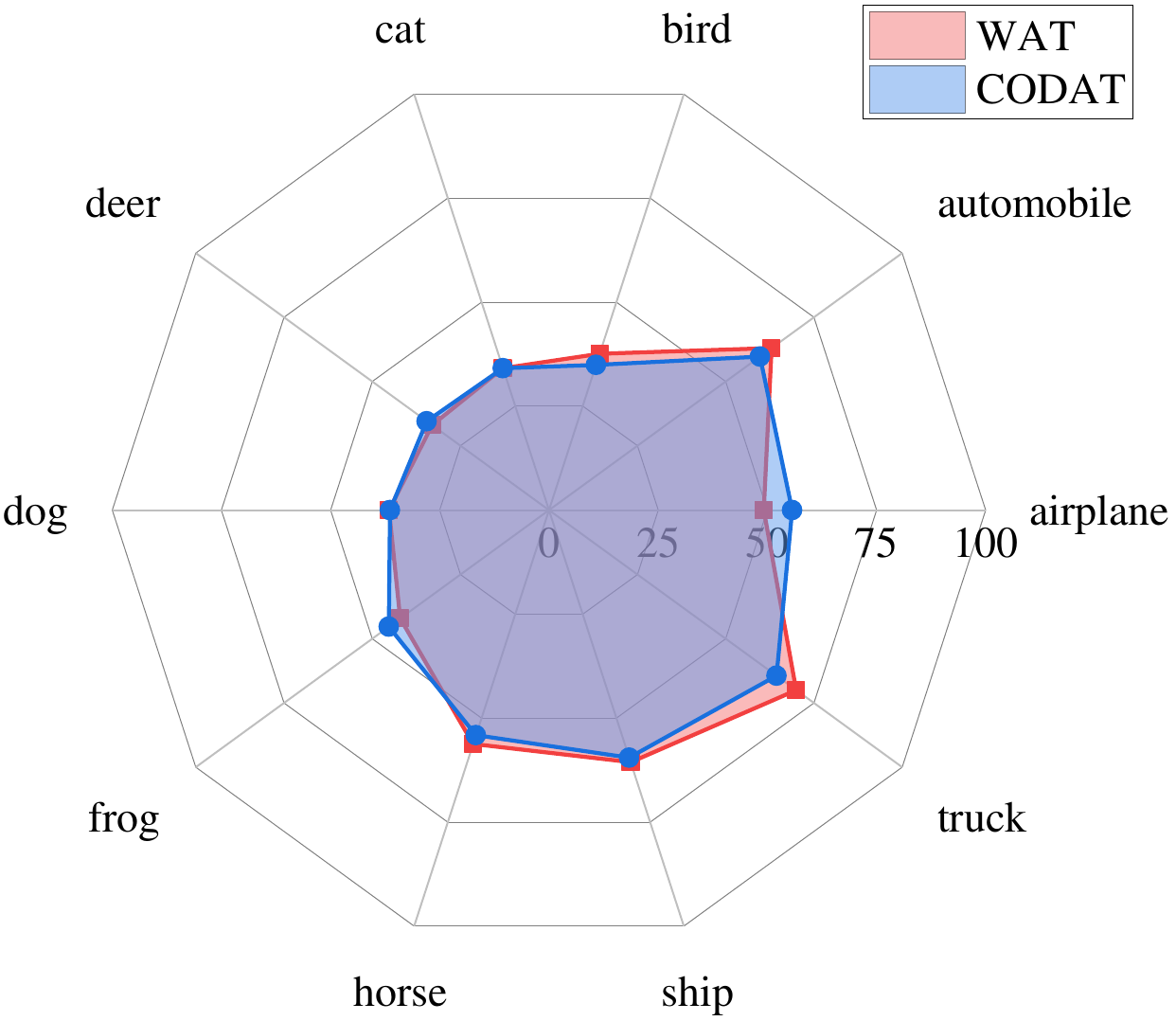}
    \label{fig:8-6}
    }
    \hfil
    \caption{Comparative analysis of class-wise robust accuracy for our method and baselines on CIFAR-10 using ResNet-18 under CW-30 attack.}
    \label{fig:8} 
\end{figure}

We perform a horizontal comparative analysis of class-wise robust accuracy with baseline methods on CIFAR-10 using ResNet-18 under CW-100 attack in~\cref{fig:8}. As shown in~\cref{fig:8-1} and~\cref{fig:8-2}, although our method exhibits a slight decrease in robust accuracy for some well-performing classes compared to traditional adversarial training methods such as AT and TRADES, it achieves significant improvements in the most vulnerable classes, specifically cat and deer, thereby enhancing robust fairness. The remaining figures in~\cref{fig:8} demonstrate that, with the exception of the class automobile, our method outperforms SOTA methods in almost all classes. This provides substantial evidence of the effectiveness of CODAT in improving robust fairness. 
\end{appendices}

\end{document}